\documentclass[11pt]{article}

\usepackage[table]{xcolor}

\definecolor{oursrow}{HTML}{F3F8FF}      % very light blue
\definecolor{bestcell}{HTML}{DFF3E3}     % light green
\definecolor{secondcell}{HTML}{FFF3CD}   % light yellow

\newcommand{\best}[1]{\cellcolor{bestcell}\textbf{#1}}
\newcommand{\second}[1]{\cellcolor{secondcell}\underline{#1}}

\usepackage[final]{acl}

\usepackage{times}
\usepackage{latexsym}
\usepackage[T1]{fontenc}
\usepackage[utf8]{inputenc}
\usepackage{microtype}
\usepackage{inconsolata}

\usepackage{graphicx}
\usepackage{amsmath}
\usepackage{amssymb}
\usepackage{amsthm}
\usepackage{multirow}
\usepackage{float}
\usepackage{booktabs}
\usepackage{colortbl}
\usepackage{stfloats}  % allow figure*/table* to land at bottom of page (not just top)
\usepackage{placeins} % \FloatBarrier to keep wide floats inside their sections

\usepackage{algpseudocode}

\newtheorem{proposition}{Proposition}

\newcommand{\NR}{\textcolor{black!40}{--}}
\newcommand{\sys}{GuardianAgent}
\newcommand{\amrsf}{AMRSF}
\newcommand{\ours}{Ours}

\title{\sys: Policy-Conditioned Risk-Adaptive Anonymization with Verified Adversarial Escalation}

\author{
  Ruiyi Yang\textsuperscript{1} \quad
  Gayathri Lihinikaduarachchi\textsuperscript{1} \quad
  Rahat Masood\textsuperscript{1} \\
  \textbf{Flora D. Salim}\textsuperscript{1} \quad
  \textbf{Salil S. Kanhere}\textsuperscript{1} \\ 
  \textsuperscript{1}School of Computer Science and Engineering,
  UNSW Sydney, Australia \\
  \texttt{\{ruiyi.yang, g.lihinikadu\_arachchillage\}@student.unsw.edu.au} \\  \texttt{\{rahat.masood, flora.salim, salil.kanhere\}@unsw.edu.au}
}

\begin{document}
\maketitle

% =====================================================================
\begin{abstract}
Privacy protection for live web traffic requires more than detecting private spans. Agent-based privacy protection systems must determine whether an outgoing action complies with the destination site's privacy policy, then apply only the level of rewriting or sanitisation justified by the residual disclosure risk. We present \textbf{\sys}, a policy-conditioned anonymization framework that couples structured risk assessment with verified adaptive rewriting. \sys{} computes risk through \textbf{\amrsf{}} (Adaptive Multi-factor Risk Scoring Formula), an explicit controller that combines policy-violation likelihood with data sensitivity, recipient transmission, purpose legitimacy, contextual basis, and policy transparency, rather than relying on an LLM to assign risk directly. This risk score determines both the allow/transform/deny decision and the initial anonymization level. For efficiency, \sys{} uses an evidential fast path for low-uncertainty policy matches and invokes an LLM slow path only for uncertain cases. For rewriting, it applies a \textbf{five-level hierarchy} driven by a \textbf{verified adversarial guesser}: guesses trigger escalation only when supported by the original text, preventing hallucinated attacker confidence from causing unnecessary over-anonymization. Experiments across three benchmarks spanning legal text (TAB), Reddit posts (SynthPAI), and multi-format synthetic PII records (PII-Masking-300k) show that \sys{} achieves the strongest privacy--utility trade-off among published baselines and is the only method to reach $\geq\!0.90$ privacy in all three domains, remaining robust under a backbone switch. Action-context stress tests further show that the same outgoing text receives different decisions and anonymization strengths under different recipients, purposes, action bases, and policy-transparency conditions.\footnote{Code and evaluation harness at \url{https://anonymous.4open.science/r/guardianAgent/}.}
\end{abstract}

% =====================================================================
\section{Introduction}
\label{sec:intro}

Modern web applications routinely expose fine-grained personal signals through form fields, paste events, outbound requests, clipboard content, and chatbot interactions. A browser-based \emph{privacy agent} should therefore intervene before an action leaves the browser: it must determine whether the action is consistent with the destination site's privacy policy, and when needed, rewrite the outgoing content to reduce privacy leakage~\citep{zhou2025rescriber, staab2025language, chen2023hide, mireshghallah2023can}. This setting brings together two NLP problems that are usually studied in isolation: \textbf{policy matching}, which grounds an observed action in a heterogeneous privacy policy, and \textbf{text anonymization}, which rewrites content so that sensitive attributes cannot be inferred.

Existing work leaves three gaps. First, policy-understanding systems classify, retrieve, or summarize privacy-policy statements~\citep{harkous2018polisis,ahmad2021intent,srinath2021privaseer,tang2023policygpt,sun2025empowering}, and recent LLM-based interfaces make policies easier to query and assess~\citep{chen2025clear,freiberger2026helping}. However, these systems do not provide an operational risk signal that links live action context to the required strength of anonymization. The same text may be acceptable with light rewriting when sent to a first-party service for core functionality, but require stronger rewriting or blocking when sent to an advertising network for profiling. Second, recent LLM-based anonymizers often invoke a large model for every browser event~\citep{staab2025language,shao2025agentstealth}, making them poorly suited for low-latency browser-resident mediation. Third, adversarial anonymizers commonly escalate rewriting whenever an auxiliary attacker reports high confidence. Such confidence, however, may reflect hallucinated priors rather than genuine residual leakage. For example, given \texttt{[PERSON] going to [LOCATION] this afternoon}, an attacker may confidently guess ``coffee shop'' although the original was ``school with Bob'', causing unnecessary over-anonymization.

This paper presents \textbf{\sys}, a policy-conditioned privacy agent that couples structured risk assessment with verified adaptive anonymization. \sys{} computes risk through \amrsf{}, an explicit controller that combines policy-violation likelihood with data sensitivity, recipient transmission, purpose legitimacy, contextual basis, and policy transparency. The resulting score jointly determines the allow/transform/deny decision and the initial anonymization level, ensuring that rewriting strength is controlled by action context rather than by an LLM-assigned risk judgment alone. To reduce latency, \sys{} uses an evidential fast path for low-uncertainty policy matches and routes only uncertain cases to an LLM slow path. The anonymizer then applies a five-level rewrite hierarchy and escalates only when a verified adversarial guesser identifies residual leakage that is actually supported by the original text. Our main contributions are:

\begin{itemize}\itemsep -1pt
\item \textbf{Policy-conditioned risk-adaptive anonymization}: we introduce a structured risk score that jointly governs action-level decisions and anonymization strength. Action-context stress tests show that the same outgoing text receives different decisions and initial anonymization levels depending on recipient, purpose, action-basis, and policy transparency.

\item \textbf{Verified adversarial anonymization}: we propose a hierarchical anonymizer that escalates rewriting only when adversarial guesses are supported by the original text, preventing hallucinated confidence from triggering unnecessary semantic distortion.

\item \textbf{Cross-domain empirical validation}: experiments on legal text (TAB), Reddit posts (SynthPAI), and multi-format PII records (PII-Masking-300k) show that \sys{} consistently improves the privacy--utility trade-off over published baselines, is the only method to reach $\geq\!0.90$ privacy across all three domains, and maintains its gains under a backbone switch.

\end{itemize}

% =====================================================================
\section{Related Work}
\label{sec:related}

\paragraph{Privacy-policy grounding.}
Prior work has developed methods for extracting structured statements from privacy policies to enable downstream systems to reason about data practices. Polisis~\citep{harkous2018polisis} trained classifiers on the OPP-115 schema~\citep{wilson2016creation}; PrivaSeer~\citep{srinath2021privaseer} built a corpus-scale search infrastructure for privacy policies; PrivBERT~\citep{srinath2021privacy} and PolicyGPT~\citep{tang2023policygpt} studied domain-adaptive and LLM-based policy understanding; and \citet{ahmad2021intent} framed policy analysis as joint intent classification and slot filling. These systems make privacy policies more machine-interpretable, but they primarily support descriptive analysis: they do not directly ground a live browser action in the relevant policy evidence, nor do they determine whether the outgoing content should be allowed, rewritten, or blocked.

\paragraph{Privacy-risk frameworks.}
Quantitative privacy-risk modelling draws on regulatory tiering and contextual theories, including NIST SP~800-122~\citep{mccallister2010guide} and GDPR special categories~\citep{regulation2016regulation}, FAIR-style likelihood--impact products~\citep{cronk2021quantitative}, and contextual integrity~\citep{nissenbaum2009privacy,mireshghallah2023can}. Recent LLM-based systems further support privacy-policy assessment and risk comprehension~\citep{sun2025empowering,xie2025evaluating}, while agent-oriented work studies privacy risks in interactive LLM systems~\citep{ngong2025protecting}. These works are mainly used for explanation, auditing, or evaluation; \sys{} instead uses risk factors as a runtime controller linking policy evidence and action context to mediation decisions and anonymization strength.

\paragraph{LLM-based text anonymization.}
Recent anonymization methods increasingly rewrite content rather than only tagging sensitive spans. Staab~\textit{et al.}\@'s adversarial anonymizer~\citep{staab2025language} iteratively rewrites text until a simulated attacker cannot infer attributes; AgentStealth~\citep{shao2025agentstealth} distils this adversarial loop with reinforcement learning; HaS~\citep{chen2023hide} sanitizes prompts before third-party LLM calls; CONFAIDE~\citep{mireshghallah2023can} applies contextual-integrity prompts; and Rescriber~\citep{zhou2025rescriber} uses a small LLM for on-device chat privacy. Related work also studies entity-tagging approaches~\citep{mancera2025pba}, clinical-text anonymization~\citep{pissarra2024unlocking}, differential privacy for prompts~\citep{duan2023flocks}, and private-attribute randomization~\citep{frikha2025incognitext}. 
These methods improve privacy protection beyond span detection, but adversarial rewriting loops often treat attacker confidence as direct evidence of residual leakage. This can trigger unnecessary rewriting when a guess is plausible under world knowledge but not actually supported by the original text.

\paragraph{Dual-system inference and uncertainty gating.}
Routing only hard cases to an expensive model is a well-established efficiency strategy. FrugalGPT~\citep{chen2023frugalgpt} cascades LLMs to reduce inference cost; BranchyNet~\citep{teerapittayanon2016branchynet} adds early exits for neural networks; and fast/slow hybrid architectures draw on the broader dual-system view of decision-making~\citep{kahneman2011fast}. Evidential deep learning~\citep{sensoy2018evidential} provides a principled mechanism to estimate epistemic uncertainty and decide when to escalate. In privacy mediation, however, uncertainty gating cannot be only cost-driven: fast-path decisions must remain grounded in policy evidence, and uncertain cases must be escalated conservatively, as errors may expose sensitive information.

\paragraph{Positioning.}
\sys{} connects these lines of work by formulating live privacy mediation as policy-conditioned risk-adaptive anonymization. A structured risk controller links policy evidence and action context to both action-level decisions and anonymization strength, while verified adversarial feedback ensures that rewriting escalates only in response to residual leakage supported by the original text rather than hallucinated attacker confidence.

% =====================================================================
\section{Method}
\label{sec:method}

\begin{figure*}[t]
\vspace{-0.8em}
    \centering
    \includegraphics[width=\linewidth]{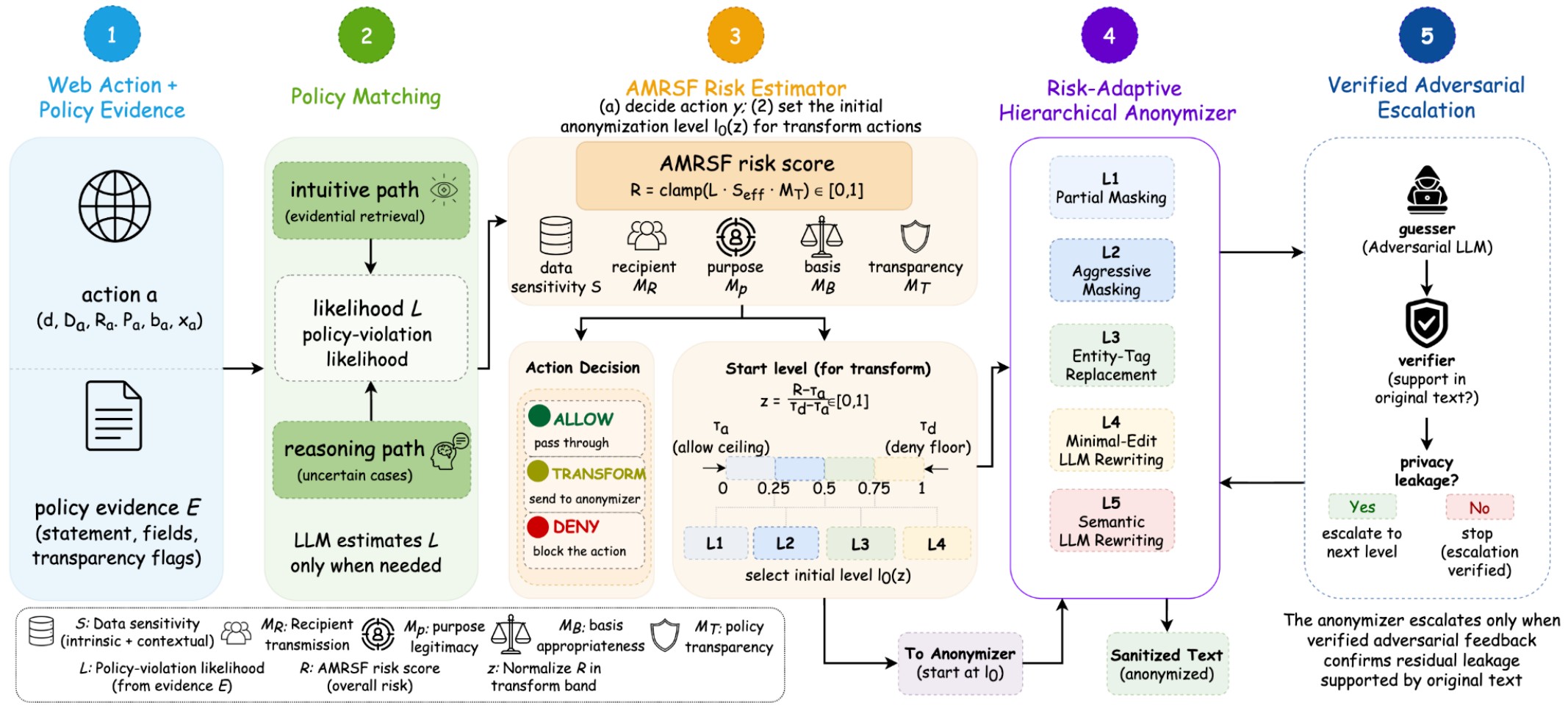}
    \caption{
    Framework of \sys{}. Policy matching estimates the violation likelihood \(L\), and \amrsf{} converts it into an action-conditioned risk score \(R\). The score has two effects: it determines the action decision \(y\in\{\texttt{allow},\texttt{transform},\texttt{deny}\}\), and, for transformed actions, its normalized transform-band position \(z\) selects the initial anonymization level \(\ell_0(z)\). The hierarchical anonymizer then escalates when leakage is detected.
    }
    \label{fig:framework}
\vspace{-1em}
\end{figure*}

\subsection{Overview}
\label{sec:method-overview}

A web action is represented as
\[
\mathbf{a}=(d,\mathcal{D}_{\mathbf{a}},\mathcal{R}_{\mathbf{a}},
\mathcal{P}_{\mathbf{a}},b_{\mathbf{a}},x_{\mathbf{a}})\in\mathcal{A},
\]
where \(d\) is the destination domain, \(\mathcal{D}_{\mathbf{a}}\) the data categories (e.g., email, location, health, financial, or free-text content),
\(\mathcal{R}_{\mathbf{a}}\) the recipients, \(\mathcal{P}_{\mathbf{a}}\) the declared purposes,
\(b_{\mathbf{a}}\in\{\text{user-initiated},\text{default},\text{background}\}\) the action basis,
and \(x_{\mathbf{a}}\in\Sigma^\ast\) the outgoing text.  Here, \texttt{user-initiated}
denotes an explicit user action such as submitting a form or pasting text, \texttt{default}
denotes routine processing needed for the requested service, and \texttt{background}
denotes passive or script-triggered transmission without an immediate user action. Privacy policies are segmented into
atomic statements \(\mathcal{S}\), each annotated with structured fields such as data
category, recipient, purpose, legal basis, retention, and user-rights disclosures.  A retriever
\(\textsc{Top}_k\) selects the \(k\) most relevant policy statements as evidence
\(\mathcal{E}=\textsc{Top}_k(\mathbf{a},\mathcal{S})\).The matcher then emits an action decision and risk score,
\(
(y,R)=M(\mathbf{a},\mathcal{E}), \)
\(
\mathcal{Y}=\{\texttt{allow},\texttt{transform},\texttt{deny}\},
\)
and \sys{} implements
\begin{equation}
\Phi(\mathbf{a},\mathcal{S})
=
A\bigl(\mathbf{a},y,R\bigr)
\in \mathcal{Y}\times\Sigma^\ast .
\label{eq:system}
\end{equation}
If \(y=\texttt{transform}\), the anonymizer \(A\) uses the transform-band position of \(R\)
to select an initial rewrite level and then uses verified adversarial feedback to decide
whether further escalation is needed. \sys{} separates two controls: \amrsf{} determines the starting strength from action
context, while the verified guesser adjusts that strength only when residual leakage is
surface-supported. The overall framework is introduced in Figure~\ref{fig:framework}.

\subsection{Efficient Policy Matching}
\label{sec:efficient-matching}

The policy matcher estimates the likelihood that an outgoing action conflicts with the
retrieved policy evidence. The likelihood \(L\) is
then used as a learned input to the downstream risk controller \(R\).
To estimate \(L\) under the latency constraints of live web traffic, \sys{} uses a shared
evidence retriever followed by a two-path matcher. Pre-segmented policies \(s\in\mathcal{S}\) come with structured fields---data categories, actions, purposes,
recipients, legal basis, rights, and retention---together with the original free-text snippet.

The evidence set \(\mathcal{E}=\textsc{Top}_k(\mathbf{a},\mathcal{S})\) is retrieved using
a hybrid score combining structured field overlap with lightweight lexical matching 
\begin{equation}
\textsc{Score}(\mathbf{a},s)
=
\alpha\,\tilde{\textsc{S}}(\mathbf{a},s)
+
(1-\alpha)\,\tilde{\textsc{K}}(\mathbf{a},s),
\label{eq:hybrid}
\end{equation}
where \(\tilde{\textsc{S}}\) is the normalized structured score,
\(\tilde{\textsc{K}}\) is a normalized BM25Lite score, \(\alpha=0.6\), and \(k=8\).

The fast path applies an evidential classifier to the action--evidence pair and outputs
both the risky-class mass and epistemic uncertainty \(u\). When
\begin{equation}
u < \tau_u,
\label{eq:gate}
\end{equation}
the risky-class mass is used as \(L\). Otherwise, an LLM reasoning path estimates \(L\) from
the same action and evidence.  Importantly, the LLM estimates only \(L\): it does
not directly determine the final risk score, action decision, or anonymization level.
The encoder, retrieval weights, prompts, and online fast-path update are specified in
Appendix~\ref{app:retrieval-params} and Appendix~\ref{app:prompts}.

\subsection{Policy-Conditioned Risk Estimator}
\label{sec:policy-matcher}

The risk estimator converts the policy-violation likelihood \(L\) into an operational
decision by accounting for the sensitivity and transmission context of the outgoing action.
Given action \(\mathbf{a}\), retrieved evidence \(\mathcal{E}\), and the likelihood \(L\)
estimated by the matcher, \amrsf{} computes the action-conditioned risk score
\begin{equation}
R(\mathbf{a},\mathcal{E}) =
\mathrm{clamp}_{[0,1]}\!\bigl(L \cdot S_{\mathrm{eff}} \cdot M_T\bigr),
\label{eq:amrsf}
\end{equation}
where
\begin{equation}
S_{\mathrm{eff}} =
\min\!\bigl(1,\;S_d\cdot M_{tr}\cdot M_p\cdot M_b\bigr).
\label{eq:seff}
\end{equation}
Here \(S_d\) is base data sensitivity; \(M_{tr}\), \(M_p\), and \(M_b\) adjust severity
according to recipient transmission, purpose legitimacy, and action basis; and \(M_T\) accounts for
policy transparency. Thus, unlike an
LLM-assigned scalar risk judgment, \(R\) explicitly separates violation likelihood from
the contextual severity of the outgoing action. These fixed factors are grounded in regulatory and empirical privacy-risk
studies
\citep{mccallister2010guide,regulation2016regulation,nissenbaum2009privacy,
milne2004strategies,ackerman1999privacy,bhatia2018empirical}.
Full numerical factors are reported in Tables~\ref{tab:data-full}--\ref{tab:transparency}
of Appendix~\ref{app:amrsf}.

Let
\(t(\mathbf{a})\in\{\text{crit},\text{high},\text{mod},\text{low}\}\) denote the highest
data-sensitivity tier in \(\mathcal{D}_{\mathbf{a}}\), with allow ceiling \(\tau_a^t\)
and deny floor \(\tau_d^t\). Then action decision
\begin{equation}
y =
\begin{cases}
\texttt{allow} & R < \tau_a^{t(\mathbf{a})},\\
\texttt{transform} & \tau_a^{t(\mathbf{a})} \le R < \tau_d^{t(\mathbf{a})},\\
\texttt{deny} & R \ge \tau_d^{t(\mathbf{a})}.
\end{cases}
\label{eq:decision}
\end{equation}
Sensitive tiers use a wider \texttt{transform} region so that uncertain high-risk actions
are rewritten rather than released unchanged. Threshold values are reported in
Table~\ref{tab:decision-thresholds} of Appendix~\ref{app:amrsf}.

\subsection{Risk-Adaptive Hierarchical Anonymizer}
\label{sec:anonymizer}

\begin{figure*}[t]
\vspace{-0.8em}
    \centering
    \includegraphics[width=\linewidth]{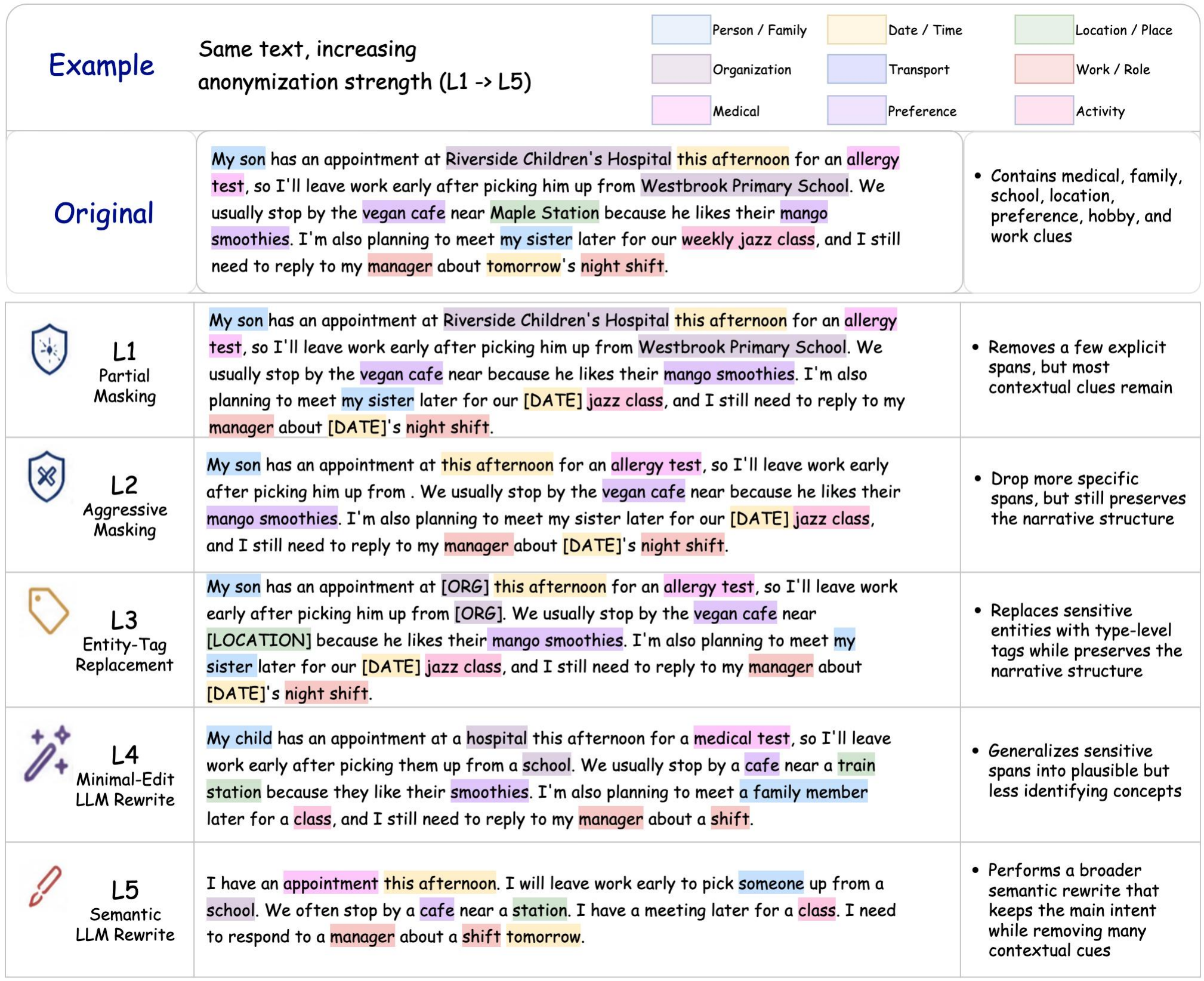}
    \caption{
    Example of the five-level anonymization hierarchy on a contextually sensitive paragraph. From L1 to L5, more entities and contextual cues are anonymized, making the text less specific while preserving overall meaning.
    }
    \label{fig:level-example}
\vspace{-0.8em}
\end{figure*}

The hierarchical anonymizer controls how strongly transformed content is rewritten, using risk to select
a starting strength and verified feedback to determine whether further escalation is needed. When \(y=\texttt{transform}\), \sys{} rewrites the outgoing text \(x_{\mathbf{a}}\) using a five-level risk-adaptive hierarchy: L1 partial masking,
L2 aggressive masking, L3 entity-tag replacement, L4 minimal-edit LLM rewriting, and L5
semantic LLM rewriting. The initial level is determined by its normalized position within
the transform band:
\begin{equation}
z(\mathbf{a}) =
\operatorname{clip}_{[0,1]}\!\left(
\frac{R - \tau_a^{t(\mathbf{a})}}
     {\tau_d^{t(\mathbf{a})} - \tau_a^{t(\mathbf{a})}}
\right),
\label{eq:band-z}
\end{equation}
and select the initial anonymization level by \(z\):
\vspace{-0.8em}
\begin{equation}
\ell_0(z) =
\begin{cases}
1 & z<0.25,\\
2 & 0.25\le z<0.50,\\
3 & 0.50\le z<0.75,\\
4 & z\ge 0.75.
\end{cases}
\label{eq:band-level}
\end{equation}
Thus, transformed actions near the allow ceiling start with lighter rewriting, while those
near the deny floor start from stronger levels. L1 partial masking, L2 aggressive masking, L3 entity-tag replacement, L4 minimal-edit LLM rewriting, and L5 semantic LLM rewriting. L1–L3 follow rule- and tag-based anonymization, L4 follows the minimal-edit paradigm, and L5 follows semantic iterative rewriting. Figure~\ref{fig:level-example} illustrates how the hierarchy progressively removes specific
entities and contextual cues while preserving the overall communicative intent. Lower levels preserve more of the original surface form and contextual detail, whereas higher levels are
reserved for cases where weaker rewrites leave residual leakage.

\subsection{Verified Adversarial Escalation}
\label{sec:guesser}

Prior adversarial anonymizers use an auxiliary attacker to identify residual personal
information and guide further rewriting~\citep{staab2025language,shao2025agentstealth}.
However, an attacker may assign high confidence to a plausible guess that is unsupported
by the original text, causing unnecessary escalation and utility loss.
\sys{} therefore verifies attacker guesses before allowing them to trigger stronger rewriting.
 Given anonymized text \(x_\ell\), the attacker returns guesses with confidences
\(\mathcal{G}(x_\ell)=\{(g,c_g)\}\). The verifier compares each guess against the original
text \(x_0\), which remains local to the anonymizer and is never exposed to the attacker.
Let \(\mathcal{W}(s)\) denote the set of content words in \(s\) with length greater than two.
A guess is admitted only if it is explicitly contained in \(x_0\) or shares a content word
with it:

\vspace{-0.4em}
{\small
\begin{equation}
\textsc{V}(g,x_0)
=
\mathbf{1}[g\subseteq x_0]
\vee
\mathbf{1}[\mathcal{W}(g)\cap\mathcal{W}(x_0)\neq\emptyset].
\label{eq:verify}
\end{equation}
}
\vspace{-0.7em}

The verified confidence is the maximum confidence among verified guesses:

\vspace{-0.4em}
{\small
\begin{equation}
\hat{c}(x_\ell,x_0)
=
\max\{c_g:(g,c_g)\in\mathcal{G}(x_\ell),\ \textsc{V}(g,x_0)=1\}.
\label{eq:vhat}
\end{equation}
}
If no guess is verified, \(\hat{c}=0\).
Starting from the initial level \(\ell_0(z)\), \sys{} anonymizes at level \(\ell\), queries the attacker, and
escalates to \(\ell+1\) only if \(\hat{c}\ge\tau_g\). The loop terminates when
\(\hat{c}<\tau_g\), when \(\ell=5\), or after \(T=5\) rounds. Verification filters hallucinated or prior-driven guesses while preserving escalation signals for leakage that is supported by the
original text. Equation~\ref{eq:verify} is explicitly lexical: it does not recognize a correct contextual inference that is surface-disjoint from $x_0$. We therefore treat it as an anti-hallucination and escalation-cost control rather than as the system's contextual-inference defence; higher-risk flows instead start at L4 through $R$ and receive semantic rewriting without waiting for the verifier. The pseudocode, lexical verification guarantee, and the expected-cost analysis are in
Appendix~\ref{app:analysis}.

% =====================================================================
\section{Evaluation}
\label{sec:eval}

\begin{table*}[!b]
\centering
\footnotesize
\setlength{\tabcolsep}{2pt}
\renewcommand{\arraystretch}{0.92}
\resizebox{\textwidth}{!}{
\begin{tabular}{l c c c c | c c c c | c c c c}
\toprule
\rowcolor{black!10}
& \multicolumn{4}{c|}{\textbf{TAB (legal, $n=811$)}}
& \multicolumn{4}{c|}{\textbf{SynthPAI (Reddit, $n=200$)}}
& \multicolumn{4}{c}{\textbf{PII-Masking-300k ($n=500$)}} \\
\rowcolor{black!10}
\textbf{Method}
& Priv$\uparrow$ & Util$\uparrow$ & Guess$\downarrow$ & Lat.
& Priv$\uparrow$ & Util$\uparrow$ & Guess$\downarrow$ & Lat.
& Priv$\uparrow$ & Util$\uparrow$ & Guess$\downarrow$ & Lat. \\
\midrule
\multicolumn{13}{l}{\textit{Iterative / multi-pass LLM rewriters}} \\
\textbf{A. \ours{} (full, L1--L5)}
& \best{$\mathbf{.945}_{.010}$} & $.696_{.008}$ & \best{$\mathbf{.283}_{.019}$} & \best{$\mathbf{5052}_{181}$}
& \second{$\underline{.944}_{.007}$} & \best{$\mathbf{.888}_{.006}$} & $.295_{.009}$ & \best{$9949_{467}$}
& \best{$\mathbf{.901}_{.011}$} & $.488_{.008}$ & \best{$.213_{.010}$} & \best{$7453_{299}$} \\

B. Staab'25
& $.695_{.013}$ & $.564_{.017}$ & \second{$.335_{.019}$} & $25152_{239}$
& \best{$\mathbf{.975}_{.005}^{\dagger}$} & $.257_{.018}$ & \best{$\mathbf{.098}_{.014}$} & $45418_{5663}$
& \second{$\underline{.850}_{.010}$} & $.384_{.021}$ & \second{$\underline{.275}_{.020}$} & $24975_{96}$ \\

C. AgentStealth'25
& $.689_{.012}$ & \best{$\mathbf{.789}_{.007}$} & $.579_{.044}$ & $16928_{563}$
& $.864_{.009}$ & $.700_{.005}$ & \second{$.260_{.009}$} & $15485_{467}$
& $.680_{.008}$ & \second{$.634_{.015}$} & $.508_{.015}$ & $19414_{394}$ \\

D. HaS'23
& \second{$\underline{.926}_{.005}$} & $.669_{.009}$ & $.336_{.032}$ & \second{$5328_{594}$}
& $.811_{.026}$ & $.841_{.005}$ & $.477_{.020}$ & $12892_{755}$
& $.809_{.010}$ & $.586_{.011}$ & $.322_{.015}$ & \second{$\underline{8384}_{285}$} \\

E. IncogniText'25
& $.178_{.023}$ & \second{$\underline{.765}_{.011}$} & $.538_{.025}$ & $8341_{325}$
& $.553_{.019}$ & \second{$.855_{.013}$} & $.281_{.023}$ & \second{$10092_{413}$}
& $.315_{.011}$ & \best{$\mathbf{.701}_{.003}$} & $.624_{.018}$ & $9819_{202}$ \\

\midrule
\multicolumn{13}{l}{\textit{Single-shot LLM rewriters}} \\
F. CONFAIDE'24
& $\mathbf{.913}_{.005}$ & $\underline{.638}_{.008}$ & $\underline{.307}_{.027}$ & $\mathbf{679}_{97}$
& $\mathbf{.842}_{.013}$ & $\underline{.810}_{.010}$ & $\underline{.254}_{.015}$ & $\mathbf{2955}_{309}$
& $.776_{.009}$ & $\underline{.533}_{.004}$ & $.331_{.012}$ & $\mathbf{2449}_{182}$ \\

G. Pissarra'24
& $\underline{.903}_{.010}$ & $\mathbf{.701}_{.005}$ & $\mathbf{.257}_{.030}$ & $\underline{789}_{127}$
& $.815_{.002}$ & $\mathbf{.873}_{.003}$ & $.395_{.006}$ & $\underline{2943}_{293}$
& $\underline{.864}_{.010}$ & $\mathbf{.548}_{.008}$ & $\underline{.256}_{.013}$ & $2846_{157}$ \\

H. Rescriber'25
& $.719_{.022}$ & $.605_{.031}$ & $.393_{.022}$ & $3774_{398}$
& $\underline{.819}_{.012}$ & $.432_{.020}$ & $\mathbf{.182}_{.012}$ & $3241_{345}$
& $\mathbf{.892}_{.009}$ & $.282_{.008}$ & $\mathbf{.178}_{.013}$ & $\underline{2752}_{76}$ \\

I. DP-Prompt'23
& $.375_{.029}$ & $.478_{.015}$ & $.662_{.019}$ & $837_{146}$
& $.486_{.019}$ & $.539_{.032}$ & $.319_{.017}$ & $3371_{429}$
& $.524_{.025}$ & $.527_{.008}$ & $.505_{.012}$ & $3611_{97}$ \\

\midrule
\multicolumn{13}{l}{\textit{Tag-based baselines}} \\
J. Presidio
& $.703_{.015}$ & $.783_{.008}$ & $.216_{.023}$ & $14_{1}$
& $.659_{.005}$ & $.944_{.001}$ & $.375_{.015}$ & $28_{1}$
& $.511_{.019}$ & $.750_{.011}$ & $.489_{.022}$ & $24_{1}$ \\

K. PBa-LLM'25
& $.455_{.033}$ & $.809_{.009}$ & $.465_{.019}$ & $81_{13}$
& $.635_{.002}$ & $.956_{.000}$ & $.396_{.008}$ & $170_{33}$
& $.184_{.013}$ & $.906_{.005}$ & $.674_{.013}$ & $91_{1}$ \\
\bottomrule
\end{tabular}
}
\caption{
Main comparison across TAB, SynthPAI, and PII-Masking-300k.
All three corpora report 4-seed mean$_{\text{std}}$ (42--45).
Latency is mean ms per sample.
\textbf{Best} and \underline{second-best} are marked within each method block and dataset.
$^{\dagger}$Staab's high SynthPAI privacy is inflated by empty or placeholder outputs, which also explains its low utility.
}
\label{tab:main}
\vspace{-0.8em}
\end{table*}

The evaluation addresses four questions: 
(Q1) Does \sys{} improve the privacy--utility trade-off? 
(Q2) Does action-conditioned risk control both mediation decisions and anonymization strength?
(Q3) Does verified hierarchical anonymization outperform fixed rewriting strategies?
(Q4) Are the risk controller and intuitive+reasoning path calibrated and efficient?
Unless otherwise stated, all main results use the final \sys{} variant with the full L1--L5 hierarchy. Calibration results, additional confidence intervals, rewrite strategy comparison, seed variance, and robustness checks are reported in Appendix~\ref{app:calibration},~~\ref{app:rewrite} and~\ref{app:robust}.

\subsection{Setup}
\label{sec:eval-setup}

\paragraph{Benchmarks.}

Our evaluation combines anonymization, policy-matching, and risk-calibration data. For anonymization, we use three benchmarks \textbf{TAB}~\citep{pilan2022text}, which contain legal texts from ECHR judgements; \textbf{SynthPAI}~\citep{yukhymenko2024synthetic}, which
contains Reddit-style posts with personal attributes; and \textbf{PII-Masking-300k}~\citep{ai4privacy_2024}, which
contains XML, JSON, tabular, and free-form synthetic documents annotated with explicit PII
categories. For policy matching, we use \textbf{OPP-115}~\citep{wilson2016creation}. For \amrsf{} calibration, we use
an expert-rated action-scenario set and the public corpus from Staab.

\paragraph{Backbone and baselines.}
The main LLM backbone is Llama-3.2-3B-Instruct~\citep{grattafiori2024llama}; System~2, L4 rewriting, and the adversarial guesser use
the same backbone to ensure a controlled comparison across components. We compare against baselines spanning adversarial rewriting, prompt protection, contextual-integrity
prompting, on-device rewriting, differential privacy, private-attribute randomization, clinical
anonymization, and tag-based anonymization: Staab~\citep{staab2025language},
AgentStealth~\citep{shao2025agentstealth}, HaS~\citep{chen2023hide},
CONFAIDE~\citep{mireshghallah2023can}, Rescriber~\citep{zhou2025rescriber},
DP-Prompt~\citep{duan2023flocks}, IncogniText~\citep{frikha2025incognitext},
Pissarra~\textit{et al.}~\citep{pissarra2024unlocking}, and
PBa-LLM~\citep{mancera2025pba}.

\paragraph{Metrics.}
\emph{Privacy} is one minus the fraction of ground-truth sensitive spans retained in the output. \emph{Utility}
is token-Jaccard on TAB and PII-Masking-300k, and BERTScore F1 on SynthPAI. \emph{Guess} is the maximum verified adversarial-guesser confidence
on the output, and \emph{latency} is the wall-clock time per sample.

\begin{figure*}[!t]
    \centering
    \includegraphics[width=\linewidth]{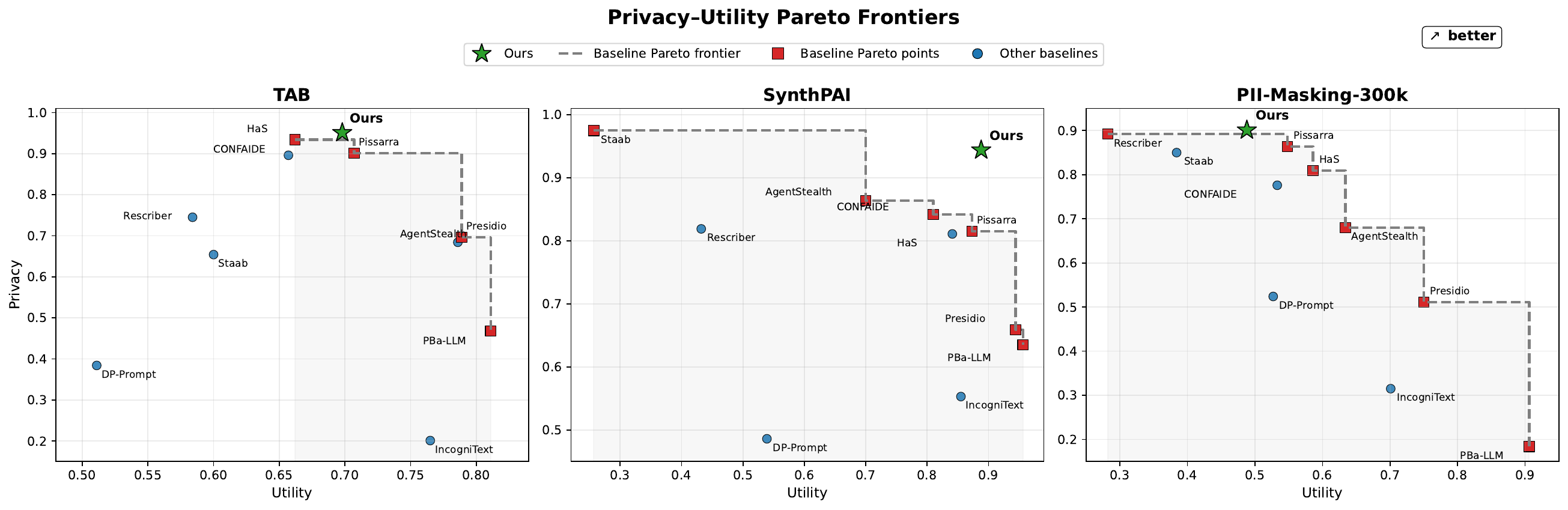}
    \caption{
    Privacy--utility Pareto frontiers across the three benchmarks.
    Baseline methods define each frontier, with \sys{} plotted separately.
    \sys{} lies above or beyond the baseline frontier on all three datasets, indicating a stronger cross-domain privacy--utility trade-off.} 
    \label{fig:pareto}
    \vspace{-0.8em}
\end{figure*}

\begin{table*}[!b]
\centering\footnotesize\setlength{\tabcolsep}{3pt}
\begin{tabular}{l c c c c c}
\toprule
\rowcolor{black!10}\textbf{Context} & $R$ & Allow & Trans. & Deny & $\overline{\ell_\text{cf}}$ \\
\midrule
$C_\text{low}$\,(first-party / functionality / paste / full)
& $.119_{.017}$ & $\mathbf{100.0}$ & $0.0$ & $0.0$ & $2.02_{1.31}$ \\
$C_\text{med}$\,(provider / analytics / default / partial)
& $.413_{.057}$ & $0.5$ & $\mathbf{98.0}$ & $1.5$ & $2.10_{1.29}$ \\
$C_\text{high}$\,(advertising / profiling / background / vague)
& $.909_{.016}$ & $0.0$ & $0.5$ & $\mathbf{99.5}$ & $4.33_{0.50}$ \\
$C_\text{ext}$\,(data broker / advertising / background / missing)
& $.909_{.011}$ & $0.0$ & $0.0$ & $\mathbf{100.0}$ & $4.34_{0.49}$ \\
\bottomrule
\end{tabular}
\caption{
Action-conditioned decision control on 200 PII-Masking samples with the same texts and fixed \(L=0.7\).
Riskier action contexts shift decisions from \texttt{allow} to \texttt{transform} and \texttt{deny}.
\(\bar{\ell}_{cf}\) is a counterfactual final anonymization level computed by routing all samples through the anonymizer; at deployment time, \texttt{allow} samples are released unchanged, while \texttt{deny} samples are blocked.
}
\label{tab:risk-control-decision}
\end{table*}

\subsection{Main Comparison}
\label{sec:eval-main}

Table~\ref{tab:main} compares \sys{} with published baselines across TAB, SynthPAI, and PII-Masking-300k. \sys{} delivers the most consistent, privacy--utility trade-off across domains. Figure~\ref{fig:pareto} reinforces this result: \sys{} lies above or beyond the baseline privacy--utility frontier on all three datasets, and is the only method above $0.90$ privacy on all domains and has the smallest
cross-domain spread. This consistency suggests that verified, risk-adaptive escalation reduces
domain-specific failure modes, especially when moving across legal prose, Reddit-style
personal attributes, and structured PII records.

Table~\ref{tab:risk-control-decision} evaluates the decision side of the controller by scoring the same 200 PII-Masking texts under four action contexts with \(L=0.7\) fixed. As recipient, purpose, action basis, and policy transparency become riskier, mean risk increases and decision distribution shifts from \texttt{allow} through \texttt{transform} to \texttt{deny}. This shows that \amrsf{} conditions mediation decisions on action context rather than on text content alone.

\subsection{Transform-Band Risk-to-Level Control}
\label{sec:eval-risk-control}

To isolate anonymization strength, the same 200 PII-Masking texts are
evaluated under four contexts constrained to the \texttt{transform} band.
For each action, \(z\) denotes the normalized position of \(R\) between
the allow ceiling and deny floor (Eq.~\ref{eq:band-z}), and
\(\ell_0(z)\) is selected by quartile (Eq.~\ref{eq:band-level}).

\begin{table*}[!tbp]
\centering\footnotesize\setlength{\tabcolsep}{4pt}
\begin{tabular}{l l l l c c c c c c c c c}
\toprule
\rowcolor{black!10}\textbf{Context} & Recipient & Purpose & Basis & $R$ & $z$ & Trans. \% & $\overline{\ell_0(z)}$ & L1 & L2 & L3 & L4 & $\overline{\ell_\text{final}}$ \\
\midrule
$T_\text{mild}$       & first-party      & Personalization & default & $.340_{.050}$ & $.252_{.179}$ & $99.0$ & $1.48$ & $\mathbf{65}$ & $22$ & $13$ & $0$  & $2.37$ \\
$T_\text{moderate}$   & service-provider & Personalization & default & $.457_{.066}$ & $.538_{.208}$ & $89.0$ & $2.52$ & $1$ & $\mathbf{64}$ & $16$ & $18$ & $2.93$ \\
$T_\text{strong}$     & service-provider & Marketing       & default & $.557_{.082}$ & $.744_{.170}$ & $81.5$ & $3.33$ & $1$ & $0$ & $\mathbf{64}$ & $35$ & $3.64$ \\
$T_\text{near-deny}$  & advertising      & Marketing       & default & $.704_{.053}$ & $.980_{.080}$ & $13.0$ & $3.98$ & $0$ & $0$ & $0$ & $\mathbf{99}$ & $4.42$ \\
\bottomrule
\end{tabular}
\caption{
Transform-band risk-to-level control on 200 PII-Masking samples with fixed \(L=0.7\).
As normalized band position \(z\) increases, the dominant starting level shifts from L1 to L4, the mean initial level \(\overline{\ell_0(z)}\) rises monotonically, and the final level after verified escalation also increases.
\texttt{Trans. \%} is the fraction of samples remaining within the \texttt{transform} region.
}

\vspace{-0.8em}
\label{tab:risk-to-level-band}
\end{table*}

Table~\ref{tab:risk-to-level-band} shows that anonymization strength follows
the action-conditioned band position. As \(z\) increases, the dominant
starting level shifts from L1 to L4 and \(\overline{\ell_0(z)}\) rises
monotonically. Verified adversarial escalation preserves this ordering, with the mean final level rising from 2.37 to 4.42. Thus, among actions selected for
transformation, \sys{} does not use a fixed rewriting policy: \amrsf{} sets the starting strength from the action context, and verified feedback escalates only when residual leakage remains.

\subsection{Ablation}
\label{sec:eval-ablation}

\begin{table*}[!t]
\centering\footnotesize\setlength{\tabcolsep}{4pt}
\begin{tabular}{l c c c | c c c | c c c}
\toprule
\rowcolor{black!10} & \multicolumn{3}{c|}{\textbf{TAB}} & \multicolumn{3}{c|}{\textbf{SynthPAI}} & \multicolumn{3}{c}{\textbf{PII-Mask}} \\
\rowcolor{black!10}\textbf{Cap} & Priv & Util & Guess & Priv & Util & Guess & Priv & Util & Guess \\
\midrule
$\ell_\text{max}{=}3$ & $.924$ & $\mathbf{.709}$ & $.356$ & $.913$ & $.897$ & $.329$ & $.869$ & $\underline{.533}$ & $\underline{.247}$ \\
$\ell_\text{max}{=}4$ & $.933$ & $.706$ & $.325$ & $.912$ & $\mathbf{.905}$ & $.316$ & $.849$ & $\mathbf{.539}$ & $.283$ \\
\textbf{$\ell_\text{max}{=}5$} & $\mathbf{.945}_{.011}$ & $.696_{.009}$ & $\mathbf{.296}$ & $\mathbf{.944}_{.007}$ & $.888_{.006}$ & $\mathbf{.299}$ & $\mathbf{.901}_{.011}$ & $.488_{.008}$ & $\mathbf{.213}_{.010}$ \\
\bottomrule
\end{tabular}
\caption{
Level-progression ablation with maximum level $\ell_{\max}\in\{3,4,5\}$.
L3 uses only rule-based masking/tagging, L4 adds minimal-edit rewriting, and L5 adds the semantic safety-net.
% Rows with $\ell_{\max}=5$ report 4-seed mean$_{\text{std}}$; rows with $\ell_{\max}\in\{3,4\}$ are single-seed.
}
\label{tab:level-progression}
\vspace{-0.8em}
\end{table*}

\begin{table}[!htbp]
\centering\small\setlength{\tabcolsep}{4pt}
\begin{tabular}{l c c r}
\toprule
\rowcolor{black!10}\textbf{Variant} & Priv$\uparrow$ & Util$\uparrow$ & ms \\
\midrule
\multicolumn{4}{l}{\textit{SynthPAI}} \\
A. NER only & $.483_{.003}$ & $.944_{.001}$ & 23 \\
B. + guesser & $.569_{.007}$ & $.938_{.001}$ & 729 \\
C. + LLM rewrite & $.910_{.011}$ & $.899_{.003}$ & 2518 \\
\textbf{D. + L1--L5 adaptive} & $\mathbf{.944}_{.007}$ & $.888_{.006}$ & 9949 \\
\midrule
\multicolumn{4}{l}{\textit{TAB}} \\
A. NER only & $.848_{.022}$ & $.735_{.009}$ & 11 \\
B. + guesser & $.895_{.013}$ & $.718_{.007}$ & 2731 \\
C. + LLM rewrite & $.915_{.018}$ & $.725_{.010}$ & 831 \\
\textbf{D. + L1--L5 adaptive} & $\mathbf{.945}_{.011}$ & $.696_{.009}$ & 5052 \\
\midrule
\multicolumn{4}{l}{\textit{PII-Masking-300k}} \\
A. NER only & $.389_{.014}$ & $.813_{.009}$ & 14 \\
B. + LLM rewrite & $.853_{.007}$ & $.546_{.008}$ & 4084 \\
\textbf{C. + L1--L5 adaptive} & $\mathbf{.901}_{.011}$ & $.488_{.008}$ & 7453 \\
\bottomrule
\end{tabular}
\caption{
Component ablation on SynthPAI, TAB, and PII-Masking-300k (mean$_{\text{std}}$ across 4 seeds; $n=200$ for SynthPAI/TAB and $n=500$ for PII-Masking).
Variants are cumulative: NER only, plus adversarial feedback, plus single-level LLM rewriting, and the full verified L1--L5 adaptive stack.
}
\label{tab:ablation}
\vspace{-1em}
\end{table}

Table~\ref{tab:ablation} decomposes the contributions of the main components. NER alone is strong on TAB, where many sensitive spans are explicit legal entities, but it is less effective on SynthPAI, where personal attributes are more implicit and contextual. Adding adversarial feedback and LLM rewriting improves privacy, while the full L1--L5 adaptive stack achieves the best privacy across both datasets at a moderate utility cost. Additional comparisons between semantic-only and minimal-edit-only rewriting are reported in Table~\ref{tab:l4-pareto} (Appendix~\ref{app:robust}).
Table~\ref{tab:level-progression} isolates the role of the anonymization hierarchy. L1--L3 remove many explicit spans, especially in legal prose and structured records. L4 adds high-utility minimal edits, while L5 is most useful when weaker rewrites leave contextual leakage. The results on PII-Masking-300k shows why a fixed maximum level can be brittle: minimal edits may preserve too much structure, whereas the semantic safety-net recovers privacy in harder cases.
Qualitative side-by-side rewrites are in Appendix~\ref{app:casestudies}.

\subsection{Calibration, Efficiency, and Robustness}
\label{sec:eval-robust}

\amrsf{} is evaluated on an expert-rated scenario set and a larger Staab-derived calibration corpus. The score aligns well with annotated risk in both settings, while the larger corpus provides a more stable estimate because the expert-rated set is small. The evidential fast path is evaluated on OPP-115 policy-action pairs and escalates only a small fraction of events to the LLM, supporting its role as an efficiency mechanism for live traffic mediation. Full calibration statistics, confidence intervals, and System~1 metrics are reported in Appendix~\ref{app:calibration}.

Robustness checks support the two adaptive mechanisms. Disabling verification leads to more level upgrades without improving privacy, while reducing utility and increasing latency, confirming that raw attacker confidence often reflects unsupported guesses. By switching backbone from Llama-3.2-3B to Qwen3-8B, \sys{} remains the strongest privacy--utility method on test benchmarks. Additional paired-bootstrap tests, seed variance, attacker-replacement experiments, parsing-failure analysis, backbone-portability details, and online fast-path self-improvement results are in Appendix~\ref{app:robust}.

% =====================================================================
\section{Conclusion}
\label{sec:conclusion}

This paper presented \sys{}, a policy-conditioned framework for risk-adaptive anonymization of live web traffic. Rather than relying on an LLM to directly assign privacy risk, \sys{} uses \amrsf{} to combine policy evidence and action context, determining both the allow/transform/deny decision and the initial anonymization level. A verified adversarial guesser then escalates rewriting only when attacker guesses are supported by the original text. Across legal text, Reddit-style personal attributes, and multi-format PII records, \sys{} improved the privacy--utility trade-off over published baselines, remained robust under a backbone switch, and adapted its rewriting strength across different action contexts.

\section*{Limitations}

The evaluation is limited to English and uses public or synthetic corpora; real browser traces, clinical and dialogue data, and multilingual settings require further study. \amrsf{} should also be viewed as a mediation controller rather than a legal compliance engine. Deployments should therefore prioritize local processing, disclose any cloud escalation, allow user override, and communicate that anonymization reduces but does not eliminate the re-identification risk.

\section*{Ethical considerations}
This work collects no new personal data and involves no human subjects. Experiments use
publicly released or synthetic benchmark datasets and an open-weight backbone or publicly
accessible model APIs; no private browser traces are used. GuardianAgent is intended to
reduce unintended disclosure, not to guarantee anonymity or legal compliance. Any
deployment should disclose rewriting and cloud escalation, preserve user control over
mediation decisions, and communicate residual re-identification and utility-loss risks.

\section*{Acknowledgments}

The authors thank the UNSW-UTS Trustworthy Digital Society for supporting this research. Also, this research
used the NVIDIA H100 cluster provided by SHARON AI.

% =====================================================================
\bibliography{custom}

\appendix

% =====================================================================
% =====================================================================
\section{System\,1 Architecture and Retrieval Hyperparameters}
\label{app:retrieval-params}

\paragraph{System\,1 encoder and EDL classifier (referenced from \S\ref{sec:efficient-matching}).} Behaviour and policy encoders $\phi_b,\phi_p:\mathcal{A}\!\cup\!\mathcal{S}\!\to\!\mathbb{R}^d$ are either a $d\!=\!42$ multi-hot variant over OPP-115 categories or a $d\!=\!384$ frozen sentence-transformer~\citep{reimers2019sentence}. Writing $b\!=\!\phi_b(\mathbf{a})$ and $p\!=\!\phi_p(\mathbf{s})$, the interaction feature map is
\begin{equation}
\psi(\mathbf{a},\mathbf{s}) = \bigl[\,b\,;\,p\,;\,b\!\odot\!p\,;\,|b\!-\!p|\,\bigr] \in \mathbb{R}^{4d}.
\end{equation}
A two-layer MLP with ReLU and dropout, $f_\theta:\mathbb{R}^{4d}\!\to\!\mathbb{R}^K$ with $K\!=\!2$, yields logits that we convert to Dirichlet parameters following EDL~\citep{sensoy2018evidential}: $\boldsymbol\alpha\!=\!\text{softplus}(f_\theta(\psi))\!+\!\mathbf{1}$, $S\!=\!\sum_k\alpha_k$. Class probabilities are $p_k\!=\!\alpha_k/S$ and epistemic uncertainty is $u\!=\!K/S$. The likelihood used in Eq.~\ref{eq:amrsf} is $L\!=\!\alpha_2/S$.

\paragraph{Retrieval scoring (referenced from \S\ref{sec:efficient-matching}).} The structured score over fields $\mathcal{F}\!=\!\{\text{data},\text{act},\text{purp},\text{recip}\}$ is
\begin{equation}
\textsc{S}(\mathbf{a},s) = \!\!\!\sum_{f\in\mathcal{F}}\!\!w_f\rho_f + \textsc{Flags}(s),
\end{equation}
with weights $(0.35,0.35,0.20,0.10)$, $\rho_f=\min(1, |\mathbf{a}_f\cap s_f|/\max(1,|\mathbf{a}_f|))$, and \textsc{Flags}$(s)$ adding small bonuses for disclosed user rights, trans-EEA transfer, and retention. BM25Lite uses $k_1\!=\!1.5$, $b\!=\!0.75$ over the tokenized snippet, with on-the-fly IDF (no global inverted index per-domain).

% =====================================================================
\section{Prompts}
\label{app:prompts}

\paragraph{System\,2 decider (system prompt).}
\begin{quote}\small\ttfamily
Given an outgoing action and retrieved policy evidence, estimate the likelihood that the
action conflicts with the policy. Return STRICT JSON:
\{violation\_likelihood: [0..1], rationale, evidence\_ids\}.
Do not decide whether the action should be allowed, transformed, or denied.
\end{quote}

\paragraph{Adversarial guesser (system prompt).}
\begin{quote}\small\ttfamily
You are an adversarial privacy auditor. Given anonymized text, try to reconstruct or guess the redacted information. For each guess, provide: field, guess, confidence (0.0--1.0), reasoning. Return JSON: \{``guesses'': [...], ``max\_confidence'': 0.0--1.0\}.
\end{quote}

\paragraph{L4 minimal-edit rewriter.}
\begin{quote}\small\ttfamily
Rewrite the following sentence so that the listed sensitive fields cannot be recovered, while preserving the communicative intent. Do not insert [TAGS]; produce natural text. Sensitive fields: \{fields\}. Input: ``\{text\}''.
\end{quote}

\paragraph{L5 semantic rewriter.}
\begin{quote}\small\ttfamily
Rewrite the text to preserve the communicative intent while removing identifying details and contextual cues. Full paraphrasing is allowed.
\end{quote}

% =====================================================================
\section{Full AMRSF Factor Tables}
\label{app:amrsf}

Table~\ref{tab:data-full} gives the full per-category data weights used to compute $S_d$ in Eq.~\ref{eq:seff}. Table~\ref{tab:transmission} gives the per-recipient transmission risk $M_{tr}$ and Table~\ref{tab:purpose} the per-purpose legitimacy $M_p$. Table~\ref{tab:transparency} lists the six-signal credits used by the transparency multiplier $M_T$.

\begin{table}[h]
\centering\small\setlength{\tabcolsep}{3pt}
\begin{tabular}{l c l}
\toprule
\rowcolor{black!10}\textbf{Category} & \textbf{Wt.} & \textbf{Source} \\
\midrule
Credentials & 1.00 & NIST~800-122 \\
Financial & 0.95 & Milne'04 \\
Health, Biometric, Genetic & 0.95 & GDPR Art.\,9 \\
SSN & 0.90 & NIST~800-122 \\
Location (precise) & 0.85 & Milne'04 \\
Location (unspec.) & 0.75 & default \\
Phone & 0.70 & Ackerman'99 \\
Content & 0.65 & Bhatia'18 \\
PostalAddress & 0.65 & Ackerman'99 \\
Contact & 0.60 & default \\
Email & 0.55 & Ackerman'99 \\
SearchHistory & 0.50 & Milne'04 \\
BrowsingHistory & 0.45 & Milne'04 \\
IPAddress, DeviceID & 0.40 & NIST \\
Location (coarse) & 0.35 & Milne'04 \\
AppUsage & 0.25 & low \\
Cookies & 0.20 & low \\
Language & 0.10 & low \\
\bottomrule
\end{tabular}
\caption{Full data-sensitivity weights \(\mathrm{DW}(c)\). Some rows group categories assigned the same weight.}
\label{tab:data-full}
\end{table}

\begin{table}[h]
\centering\small
\begin{tabular}{l r}
\toprule
\rowcolor{black!10}\textbf{Recipient} & \textbf{$M_{tr}$} \\
\midrule
first-party only & $0.20$ \\
service provider (Art.\,28) & $0.40$ \\
analytics / government & $0.50$ \\
social-media platform & $0.60$ \\
unknown third party & $0.70$ \\
advertising network & $0.80$ \\
data broker & $0.90$ \\
\bottomrule
\end{tabular}
\caption{Transmission risk $s_{tr}$ by recipient type.}
\label{tab:transmission}
\end{table}

\begin{table}[h]
\centering\small
\begin{tabular}{l r}
\toprule
\rowcolor{black!10}\textbf{Purpose} & \textbf{$M_p$} \\
\midrule
functionality, security, auth. & $0.15$ \\
personalization, research & $0.35$ \\
analytics & $0.45$ \\
marketing & $0.65$ \\
profiling (Art.\,22) & $0.75$ \\
advertising & $0.80$ \\
undisclosed & $0.65$ \\
\bottomrule
\end{tabular}
\caption{Purpose legitimacy $s_p$.}
\label{tab:purpose}
\end{table}

\begin{table}[h]
\centering\small
\begin{tabular}{l r}
\toprule
\rowcolor{black!10}\textbf{Signal} & \textbf{Credit} \\
\midrule
substantial text ($\ge 50$ chars) & $-0.10$ \\
covers the action's data category & $-0.15$ \\
covers the action's declared purpose & $-0.10$ \\
states a legal basis (GDPR Art.\,6) & $-0.08$ \\
discloses user rights & $-0.05$ \\
specifies retention & $-0.02$ \\
\midrule
$M_T = \max\!\bigl(0.8,\; 1.3 + \sum\text{credits}\bigr)$ & \\
\bottomrule
\end{tabular}
\caption{Transparency multiplier $M_T$ signal credits.}
\label{tab:transparency}
\end{table}

\begin{table}[h]
\centering\small
\setlength{\tabcolsep}{5pt}
\begin{tabular}{l c c}
\toprule
\rowcolor{black!10}
\textbf{Data tier} & \textbf{Allow ceiling \(\tau_a^t\)} & \textbf{Deny floor \(\tau_d^t\)} \\
\midrule
Critical & $0.10$ & $0.55$ \\
High     & $0.25$ & $0.65$ \\
Moderate & $0.35$ & $0.75$ \\
Low      & $0.45$ & $0.80$ \\
\bottomrule
\end{tabular}
\caption{Tier-specific thresholds for the action decision in Eq.~\ref{eq:decision}. Tiers are assigned to data categories as: \emph{Critical} (Credentials, Health, Biometric, Genetic, Financial, SSN), \emph{High} (precise Location, Phone, Content), \emph{Moderate} (Email, Contact, SearchHistory, BrowsingHistory), \emph{Low} (Cookies, AppUsage, DeviceID, Language). The \texttt{transform} region width $\tau_d^t-\tau_a^t$ is widest for sensitive tiers, biasing borderline actions on sensitive data toward rewriting rather than passing through unchanged.}
\label{tab:decision-thresholds}
\end{table}

\paragraph{Average final level (ablation).}
On TAB, the verified loop (D) reduces the mean final anonymization level from $3.48$ (C) to $2.85$ (D). On SynthPAI, it rises slightly from $2.79$ to $3.48$ as the verifier correctly triggers upgrades on borderline inputs.

% =====================================================================
\section{Formal Analysis}
\label{app:analysis}

\begin{figure}[h]
\centering
\begin{minipage}{0.98\columnwidth}
\small
\rule{\columnwidth}{0.6pt}\\[-2pt]
\textbf{Algorithm 1}: Verified adaptive anonymization (\textsc{VerifiedAdapt})\\[-4pt]
\rule{\columnwidth}{0.4pt}
\begin{algorithmic}[1]
\Require original text $x_0$; risk $R$; data tier $t(\mathbf{a})$; thresholds $(\tau_a^t,\tau_d^t)$; guesser $\mathcal{G}$; verifier $\textsc{V}$; $\tau_g{=}0.6$; $\ell_\text{max}{=}5$; $T{=}5$
\State $z \gets \operatorname{clip}_{[0,1]}\!\left(\frac{R-\tau_a^{t(\mathbf{a})}}{\tau_d^{t(\mathbf{a})}-\tau_a^{t(\mathbf{a})}}\right)$
\State $\ell \gets \ell_0(z)$
\For{$t = 1,\dots,T$}
  \State $x_\ell \gets \text{Anonymize}(x_0, \ell)$
  \If{$\ell = \ell_\text{max}$} \textbf{break} \EndIf
  \State $\mathcal{G}_\ell \gets \mathcal{G}(x_\ell)$
  \State $\mathcal{G}_V \gets \{(g,c_g)\in\mathcal{G}_\ell:\textsc{V}(g,x_0)=1\}$
\State $\hat{c} \gets 0$ if $\mathcal{G}_V=\emptyset$, else $\max_{(g,c_g)\in\mathcal{G}_V} c_g$
  \If{$\hat{c} \ge \tau_g$} $\ell \gets \ell + 1$
  \Else{\,} \textbf{break}
  \EndIf
\EndFor
\State \Return $x_\ell,\ell$
\end{algorithmic}
\rule{\columnwidth}{0.6pt}
\end{minipage}
\caption{Verified adaptive anonymization. The initial level is selected from the transform-band position \(z\), and the verifier admits only guesses supported by the original text.}
\label{alg:adaptive}
\end{figure}

\begin{proposition}[No missed lexically-supported guesses]
\label{prop:sound}
Assume a guess $g$ identifies an attribute of $x_0$ only if it either appears as a substring of $x_0$ or shares at least one content word ($>\!2$ characters) with $x_0$. Then $\mathcal{C}(x_0)\subseteq\mathcal{V}(x_0)$, and consequently the verified confidence $\hat{c}$ in Eq.~\ref{eq:vhat} satisfies $\hat{c} \ge \max_{g\in\mathcal{C}(x_0)} c_g$.
\end{proposition}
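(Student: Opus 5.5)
The proof is essentially definitional, so I will first fix the two sets the statement leaves implicit. For a fixed anonymized text \(x_\ell\) with attacker output \(\mathcal{G}(x_\ell)=\{(g,c_g)\}\), let \(\mathcal{C}(x_0)\) be the set of \emph{correct} guesses, i.e.\ those \(g\) that identify an attribute of \(x_0\). Let \(\mathcal{V}(x_0)=\{g:\textsc{V}(g,x_0)=1\}\) be the set of guesses admitted by the verifier of Eq.~\ref{eq:verify}. Both sets are taken as subsets of the guesses in \(\mathcal{G}(x_\ell)\), so that \(c_g\) is defined for every element and the maxima below range over the same finite family.

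\textbf{Containment.} Take any \(g\in\mathcal{C}(x_0)\). By the hypothesis of the proposition, either \(g\) is a substring of \(x_0\), or \(\mathcal{W}(g)\cap\mathcal{W}(x_0)\neq\emptyset\). I will read ``shares at least one content word (\(>2\) characters)'' as exactly the condition \(\mathcal{W}(g)\cap\mathcal{W}(x_0)\neq\emptyset\), using the definition of \(\mathcal{W}\) from Section~\ref{sec:guesser}. In the first case the indicator \(\mathbf{1}[g\subseteq x_0]\) equals one. In the second case the indicator \(\mathbf{1}[\mathcal{W}(g)\cap\mathcal{W}(x_0)\neq\emptyset]\) equals one. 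In either case the disjunction in Eq.~\ref{eq:verify} gives \(\textsc{V}(g,x_0)=1\), so \(g\in\mathcal{V}(x_0)\). Hence \(\mathcal{C}(x_0)\subseteq\mathcal{V}(x_0)\).

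\textbf{Confidence bound.} By Eq.~\ref{eq:vhat}, \(\hat{c}=\max\{c_g: g\in\mathcal{V}(x_0)\}\), with \(\hat{c}=0\) when \(\mathcal{V}(x_0)\) is empty. A maximum over a superset is at least the maximum over a subset. So if \(\mathcal{C}(x_0)\neq\emptyset\), we get \(\hat{c}\ge\max_{g\in\mathcal{C}(x_0)}c_g\). If \(\mathcal{C}(x_0)=\emptyset\), I adopt the convention \(\max\emptyset=0\), matching the paper's convention for \(\hat{c}\). The inequality then reads \(\hat{c}\ge 0\), which holds because confidences lie in \([0,1]\).

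As a corollary I will note the operational consequence. If some correct guess has \(c_g\ge\tau_g\), then \(\hat{c}\ge\tau_g\), so Algorithm~\ref{alg:adaptive} escalates whenever \(\ell<\ell_{\max}\). Verification therefore never suppresses escalation triggered by lexically supported leakage.

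The only real obstacle is matching the informal wording to the formal objects. Specifically, one must check that ``substring'' corresponds to \(g\subseteq x_0\), and that the content-word notion is the same tokenization and length filter used in \(\mathcal{W}\), including case normalization. If these differ, the hypothesis must be restated in terms of \(\mathcal{W}\) for the containment step to go through. I will also remark that the assumption is exactly what excludes surface-disjoint contextual inferences, consistent with the caveat in Section~\ref{sec:guesser}.
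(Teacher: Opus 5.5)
Your proof is correct and follows the paper's argument: each guess in $\mathcal{C}(x_0)$ fires one of the two disjuncts of Eq.~\ref{eq:verify}, giving $\mathcal{C}(x_0)\subseteq\mathcal{V}(x_0)$, and the bound on $\hat{c}$ is the maximum over a superset. Your additions, namely treating both sets explicitly as subsets of $\mathcal{G}(x_\ell)$ and using $\max\emptyset=0$ for the empty case, are small points of rigor that the paper leaves implicit.
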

\emph{Proof.} If $g\in\mathcal{C}(x_0)$ matches by substring, the first disjunct of Eq.~\ref{eq:verify} fires; if by shared content word, the second disjunct fires. In either case $\textsc{V}(g,x_0)=1$, so $g\in\mathcal{V}$. Taking $\max$ over the superset bounds $\max_{g\in\mathcal{C}}c_g$ from above. \hfill$\square$

\begin{proposition}[Bounded expected cost of the gate]
\label{prop:cost}
Let $C_1, C_2$ be the per-event costs of System\,1 and System\,2 and $p_e=\Pr[u\!\ge\!\tau_u]$ the escalation probability. Then the expected matcher cost is $\mathbb{E}[C] = C_1 + p_e C_2$. With the values measured on OPP-115 ($p_e\!=\!0.035$) and $C_2/C_1\!\approx\!10^4$, $\mathbb{E}[C]/C_2\!\approx\!p_e$, a $28.6\!\times$ reduction over an always-LLM pipeline.
\end{proposition}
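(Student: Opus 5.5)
The claim is an expectation over a two-branch procedure, so I will condition on the gate outcome and then plug in the measured constants. For each event, System\,1 (the evidential classifier of Appendix~\ref{app:retrieval-params}) always runs, because the uncertainty $u$ that decides routing in Eq.~\ref{eq:gate} is one of its outputs. System\,2 runs exactly on the event $\{u\ge\tau_u\}$. The per-event cost is therefore the random variable
\[
C = C_1 + \mathbf{1}[u\ge\tau_u]\,C_2 .
\]
I will treat $C_1$ and $C_2$ as fixed per-event costs, or, if they vary, as quantities whose means are independent of the gate indicator. Linearity of expectation then gives $\mathbb{E}[C]=C_1+\Pr[u\ge\tau_u]\,C_2=C_1+p_eC_2$. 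No independence between events is needed, since the statement is per event.

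Dividing by $C_2$ gives $\mathbb{E}[C]/C_2 = p_e + C_1/C_2$. With $C_2/C_1\approx10^4$, the correction term is about $10^{-4}$. That is negligible against $p_e=0.035$: the relative error is about $0.3\%$. This justifies $\mathbb{E}[C]/C_2\approx p_e$.

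An always-LLM pipeline costs $C_2$ per event. If it also runs System\,1, it costs $C_1+C_2$, which changes nothing at this precision. The reduction factor is therefore
\[
\frac{C_2}{C_1+p_eC_2}=\frac{1}{p_e+C_1/C_2}\approx\frac{1}{0.0351}\approx 28.5 .
\]
This rounds to the stated $1/p_e = 1/0.035 \approx 28.6\times$ once the $C_1/C_2$ term is dropped.

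The main obstacle is not algebraic. It is being explicit about the modelling assumptions that make "expected cost" well defined. Specifically, $C_2$ must not depend on whether the event escalated; otherwise the formula becomes $C_1+p_e\,\mathbb{E}[C_2\mid u\ge\tau_u]$. Also, $p_e$ here is an empirical frequency on OPP-115, not a population probability. I would state these caveats and note that the numerical claim holds only up to the $C_1/C_2$ correction.
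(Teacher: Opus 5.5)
Your proposal is correct and follows the paper's own argument: System\,1 always runs, System\,2 runs exactly on the Bernoulli event $\{u\ge\tau_u\}$, and linearity of expectation gives $\mathbb{E}[C]=C_1+p_eC_2$. You go slightly beyond the paper's one-line proof by making the $C_1/C_2\approx 10^{-4}$ correction explicit (yielding $\approx 28.5\times$ rather than the idealized $1/p_e\approx 28.6\times$) and by noting that $C_2$ must be read as the cost conditional on escalation, but the route is the same.
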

\emph{Proof.} System\,1 runs unconditionally; Eq.~\ref{eq:gate} makes System\,2 run exactly when $u\!\ge\!\tau_u$, a Bernoulli event of rate $p_e$. Linearity of expectation gives the result. \hfill$\square$

\begin{proposition}[Loop termination]
\label{prop:term}
Algorithm~1 halts in at most $T$ rounds and invokes at most one anonymizer call plus one guesser call per round, so its worst-case LLM cost is $T\cdot(C_{\text{anon}}+C_{\text{guess}})$---a constant independent of $x_0$.
\end{proposition}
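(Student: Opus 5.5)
The plan is to read the bound straight off the control flow of Algorithm~1 (\textsc{VerifiedAdapt}); no probabilistic or model-specific reasoning is needed. The argument has two parts: an iteration count, then a per-iteration accounting of the expensive calls.

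\textbf{Termination in at most $T$ rounds.} The only loop is the \textbf{for} over $t=1,\dots,T$, and its index is never modified inside the body. The body is straight-line code: one anonymization, one guess query, a finite filter over $\mathcal{G}_\ell$, a max, and a comparison. Each of these finishes provided each external call returns. I will make explicit the standing assumption that every single call to Anonymize and to $\mathcal{G}$ terminates, and that $\mathcal{G}$ returns a finite set. This is reasonable because LLM decoding is capped by a maximum token budget, so the output and the number of parsed guesses are finite. Under that assumption the algorithm halts after at most $T$ iterations. The \textbf{break} statements (when $\ell=\ell_\text{max}$ or when $\hat c<\tau_g$) can only shorten the run. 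I will also note that $\ell$ never leaves $\{1,\dots,5\}$. It starts at $\ell_0(z)\le 4$ by Eq.~\ref{eq:band-level}, and it is incremented only after the check $\ell\neq\ell_\text{max}$ has passed. Hence Anonymize$(x_0,\ell)$ is always called on a valid level.

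\textbf{Per-round cost.} Within one iteration, Anonymize is called exactly once. The guesser $\mathcal{G}$ is called at most once, and not at all if the $\ell=\ell_\text{max}$ break fires. The verifier $\textsc{V}$ of Eq.~\ref{eq:verify} is lexical: it does substring and content-word set intersection. It therefore makes no LLM calls. Its cost is polynomial in $|x_0|$ and in the total guess length, which is bounded by the token budget. The L1--L3 levels are rule-based, so their cost is at most $C_\text{anon}$, where $C_\text{anon}$ is the cost of the most expensive level (L4 or L5). Summing over at most $T$ rounds gives a worst-case LLM cost of at most $T\cdot(C_\text{anon}+C_\text{guess})$.

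\textbf{The delicate point: ``independent of $x_0$''.} Iteration-count independence is immediate, since $T$ is a fixed constant. Cost independence, however, requires interpreting $C_\text{anon}$ and $C_\text{guess}$ as per-call upper bounds. These bounds hold uniformly over inputs because of fixed context and generation limits, so the number of tokens processed per call is bounded by a constant. I will state this as the modelling convention, the same one used for $C_1$ and $C_2$ in Proposition~\ref{prop:cost}. With that convention the bound is a constant. I will also remark that the level logic gives a sharper bound: at most $\min\{T,\,6-\ell_0(z)\}$ rounds, since each non-breaking round raises $\ell$ by one and the run stops on reaching level $5$. With $T=5$ and $\ell_0\ge 1$, the level cap already binds.
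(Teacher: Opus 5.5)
Your proposal is correct and follows the same control-flow argument as the paper: each iteration either breaks or increments $\ell\le\ell_{\max}=5$, and the explicit cap $T=5$ bounds the number of rounds. You are more careful than the paper's one-line proof in four places, none of which it spells out: you make explicit that every external call must itself terminate, you note that $C_{\text{anon}}$ and $C_{\text{guess}}$ must be read as input-uniform per-call bounds (bounded context and generation) for the ``independent of $x_0$'' claim to hold, you check the one-anonymizer/one-guesser count per round, and you observe the sharper $6-\ell_0(z)$ round bound.
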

\emph{Proof.} Each iteration either breaks (on $\hat{c}<\tau_g$ or $\ell=\ell_{\max}$) or increments $\ell$. Since $\ell\le\ell_{\max}=5$ and the explicit cap $T=5$ is enforced, termination follows. \hfill$\square$

% =====================================================================
\section{Calibration Details}
\label{app:calibration}

Table~\ref{tab:amrsf} compares the full \amrsf{} controller against three single-factor ablations, a NIST Likelihood$\times$Impact baseline, and a learned binary classifier on the expert-rated scenarios and the Staab public corpus. Table~\ref{tab:sys1} reports the System\,1 evidential fast-path classifier on the OPP-115 held-out test pairs.

\begin{table}[h]
\centering\small\setlength{\tabcolsep}{3pt}
\begin{tabular}{l c c c c}
\toprule
\rowcolor{black!10}\textbf{Method} & $\rho$ & $r$ & MAE & Macro-F1 \\
\midrule
\multicolumn{5}{l}{\textit{Expert-rated scenarios ($n=500$)}} \\
\textbf{\amrsf{} (full)} & \textbf{0.982} & \textbf{0.947} & \textbf{0.119} & $\mathbf{0.812}$ \\
\amrsf{} $-M_b$ & $0.965$ & $0.940$ & $0.136$ & $0.683$ \\
\amrsf{} $-M_T$ & $0.985$ & $0.956$ & $0.134$ & $0.738$ \\
\amrsf{} $-$purpose & $0.976$ & $0.947$ & $0.135$ & $0.738$ \\
NIST L$\times$I & $0.964$ & $0.949$ & $0.108$ & $0.756$ \\
Binary classifier & $0.729$ & $0.683$ & $0.338$ & $0.435$ \\
\midrule
\multicolumn{5}{l}{\textit{Staab public corpus ($n=525$)}} \\
\amrsf{} (full) & $0.914$ & $0.866$ & $0.268$ & \NR \\
\bottomrule
\end{tabular}
\caption{Risk-score calibration on the expert-rated scenario set and Staab public corpus. The larger Staab corpus provides a more stable estimate because the expert-rated set is small.}
\label{tab:amrsf}
\end{table}

\begin{table}[h]
\centering\small\setlength{\tabcolsep}{5pt}
\begin{tabular}{l c}
\toprule
\rowcolor{black!10}\textbf{Metric} & \textbf{Value} \\
\midrule
Test pairs (OPP-115) & $6{,}392$ \\
Accuracy & $0.928$ \\
Precision & $1.000$ \\
Recall & $0.857$ \\
$F_1$ & $0.923$ \\
Expected Calibration Error & $0.048$ \\
Brier score & $0.058$ \\
Escalation rate ($u\ge 0.25$) & $3.47\%$ \\
p95 latency (fast path) & $0.25$\,ms \\
\bottomrule
\end{tabular}
\caption{System\,1 evidential classifier on OPP-115.}
\label{tab:sys1}
\end{table}

\section{Rewrite-strategy comparison}
\label{app:rewrite}

Table~\ref{tab:l4-pareto} contrasts two single-strategy variants of D (semantic-only rewriting and minimal-edit-only rewriting) with the full adaptive L1--L5 stack used by \sys{}, isolating the effect of mixing rewrite strategies under the verified-guesser loop.

\begin{table}[h]
\centering\small\setlength{\tabcolsep}{3pt}
\begin{tabular}{l c c c c}
\toprule
\rowcolor{black!10} & \multicolumn{2}{c}{\textbf{TAB}} & \multicolumn{2}{c}{\textbf{SynthPAI}} \\
\rowcolor{black!10}\textbf{D variant} & Priv & Util & Priv & Util \\
\midrule
D (semantic only) & $.974$ & $.628$ & $.942$ & $.847$ \\
D (min-edit only) & $.866$ & $\mathbf{.740}$ & $.892$ & $.892$ \\
\textbf{D (L1--L5 adaptive)} & $\mathbf{.945}_{.011}$ & $.696_{.009}$ & $\mathbf{.944}_{.007}$ & $\mathbf{.888}_{.006}$ \\
\bottomrule
\end{tabular}
\caption{
Rewrite-strategy comparison for variant D. 
The adaptive L1--L5 row reports mean$_{\text{std}}$ across 4 seeds at $n=200$; the two pilot rows are single-seed.
}
\label{tab:l4-pareto}
\end{table}

% =====================================================================
\section{Robustness Detail Tables}
\label{app:robust}

Table~\ref{tab:verif-ab} reports an A/B test that turns the verifier on or off in the adaptive loop, isolating its contribution to the privacy/utility trade-off and to escalation cost.

\begin{table}[h]
\centering\small\setlength{\tabcolsep}{4pt}
\begin{tabular}{l c c r}
\toprule
\rowcolor{black!10}\textbf{Metric} & \textbf{V~ON} & \textbf{V~OFF} & \textbf{$\Delta$} \\
\midrule
Final level & $3.52$ & $3.96$ & $-0.44$ \\
Upgrade rate & $0.52$ & $0.80$ & $-0.28$ \\
Privacy & $0.975$ & $0.975$ & $\pm 0$ \\
Utility & $0.850$ & $0.834$ & $+0.016$ \\
Latency (ms) & $6713$ & $7350$ & $-637$ \\
\midrule
Unnecessary upgrades & \multicolumn{3}{c}{$32.0\%$ $[20.8,45.8]$} \\
\bottomrule
\end{tabular}
\caption{Verification A/B on SynthPAI ($n\!=\!50$, Wilson CIs). OFF reproduces the raw-confidence behaviour of Staab/HaS/AgentStealth.}
\label{tab:verif-ab}
\end{table}

Table~\ref{tab:variance} reports four-seed variance for the five strongest baselines plus \sys{}, so the per-method spread can be read off directly.

\begin{table}[h]
\centering\footnotesize\setlength{\tabcolsep}{2.5pt}
\begin{tabular}{l c c c c}
\toprule
\rowcolor{black!10} & \multicolumn{2}{c}{\textbf{TAB}} & \multicolumn{2}{c}{\textbf{SynthPAI}} \\
\rowcolor{black!10}\textbf{Method} & Priv & Util & Priv & Util \\
\midrule
\textbf{D. \sys{}} & $\mathbf{.941}_{.024}$ & $.695_{.012}$ & $\mathbf{.962}_{.015}$ & $.879_{.034}$ \\
G. HaS & $.925_{.017}$ & $.668_{.020}$ & $.797_{.015}$ & $.845_{.009}$ \\
F. CONFAIDE & $.899_{.024}$ & $.641_{.009}$ & $.843_{.026}$ & $.830_{.020}$ \\
K. Pissarra & $.881_{.026}$ & $.712_{.019}$ & $.790_{.055}$ & $.875_{.023}$ \\
I. Staab'25 & $.667_{.052}$ & $.586_{.048}$ & $.986_{.018}^{\dagger}$ & $.272_{.042}$ \\
\bottomrule
\end{tabular}
\caption{Four-seed variance ($n\!=\!50$ per seed, mean with std as subscript, 5-level flagship). $^{\dagger}$Staab's SynthPAI privacy still reflects $22\%$ empty outputs. \sys{} is rank-1 privacy among methods with well-formed outputs at every seed.}
\label{tab:variance}
\end{table}

Table~\ref{tab:backbone} swaps the LLM backbone between Llama-3.2-3B and Llama-3.1-8B to check that \sys{}'s ranking is not an artifact of one model size.

\begin{table}[h]
\centering\small\setlength{\tabcolsep}{4pt}
\begin{tabular}{l c c | c c}
\toprule
\rowcolor{black!10} & \multicolumn{2}{c|}{\textbf{Llama-3.2-3B}} & \multicolumn{2}{c}{\textbf{Llama-3.1-8B}} \\
\rowcolor{black!10}\textbf{Method} & Priv & Util & Priv & Util \\
\midrule
\sys{} (full)  & $0.975$ & $0.855$ & $\mathbf{1.000}$ & $0.863$ \\
Staab & $0.975$ & $0.225$ & $0.950$ & $0.700$ \\
\bottomrule
\end{tabular}
\caption{Backbone sensitivity (SynthPAI, $n\!=\!50$). Privacy ranking preserved; Staab utility climbs at $8$B but \sys{} already saturates at $3$B.}
\label{tab:backbone}
\end{table}

Table~\ref{tab:error-analysis} buckets \sys{}'s outputs into four error classes to characterize where the system fails when it fails. After Holm--Bonferroni correction at family-wise $\alpha\!=\!0.05$, $30/40$ pairwise tests on TAB and $28/40$ on SynthPAI remain significant.

\begin{table}[h]
\centering\small\setlength{\tabcolsep}{3pt}
\begin{tabular}{l c c}
\toprule
\rowcolor{black!10}\textbf{Class} & \textbf{SynthPAI} & \textbf{TAB} \\
\midrule
\textsc{clean\_success} & $65.5\%$\,$[58.7,71.7]$ & $54.5\%$\,$[47.6,61.3]$ \\
\textsc{cascade} (L4 $\times\!3$) & $16.5\%$ & $22.0\%$ \\
\textsc{pii\_retained} & $3.5\%$ & $0.5\%$ \\
\textsc{parse\_fail} & $1.0\%$ & $0.0\%$ \\
\bottomrule
\end{tabular}
\caption{Error analysis (Wilson $95\%$~CIs on the main class). Dominant failure mode is hard PII-dense samples, not broken outputs.}
\label{tab:error-analysis}
\end{table}

Table~\ref{tab:qwen3} repeats the main comparison with Qwen3-8B as the LLM backbone for every method, providing a cross-model-family check that \ours{}'s privacy ranking is not Llama-specific.

Table~\ref{tab:rl-bootstrap} tracks the System\,1 fast path as it is online-bootstrapped from System\,2 teacher labels on an OPP-115 event stream, showing how quickly the slow-path escalation rate drops as fast-path accuracy climbs.

% =====================================================================
\section{Qualitative Case Studies}
\label{app:casestudies}

Two verbatim examples illustrate flagship-D behaviour relative to baselines. On the TAB sentence \emph{``On 13 September 1988 the Erzincan Martial Law Court acquitted the applicants of the charges against them''}: \sys{} (L4) emits \emph{``On a date in 1988 the court acquitted the applicants of the charges against them''}, swapping the specific date and court identity while keeping the legal speech-act intact. Staab rewrites to two sentences with redundant information (\emph{``On a certain month, a martial law court acquitted\,\dots\@ In a certain jurisdiction, a court acquitted\,\dots''}), duplicating the act; HaS returns \emph{``On \texttt{[DATE]\,[YEAR]} the \texttt{[PLACE]} Martial Law Court acquitted the applicants\,\dots''}, leaving the court category intact with a tag-heavy output that hurts fluency. On a SynthPAI post where a software engineer describes their Masters in Computer Science and their salary in Zurich, \sys{} rewrites to generic remarks about graduate studies and work while Staab preserves the Zurich reference by hallucinating ``a European city''---a regional clue that still narrows the author.

\subsection*{Risk-control context definitions}
\label{app:risk-control-contexts}

Table~\ref{tab:risk-context-defs} lists the four action contexts used in
\S\ref{sec:eval-risk-control}. Each context is a tuple
(recipient, purpose, action basis, policy transparency); these feed directly
into the AMRSF multipliers $M_{tr}$, $M_p$, $M_b$, and $M_T$ defined in
Method~\S\ref{sec:policy-matcher}. The likelihood $L$ is held fixed at $0.7$
across all contexts so that any variation in $R$ is attributable to context,
not to the LLM.

\subsection*{Per-context mean final level (transform-band)}
\label{app:risk-control-final-level}

Table~\ref{tab:risk-band-finallevel} reports the mean final level after
the verified-guesser adaptive loop, for samples in the transform region of
each band context (Llama-3.2-3B, $n\!=\!200$).

\subsection*{Risk-control qualitative cases (TAB, SynthPAI, PII-Masking)}
\label{app:risk-control-cases}

Table~\ref{tab:risk-control-cases-extra} showcases 3 examples for anonymization in TAB, SynyhPAI and PII-Masking-30k as well as risk score with decisions.

\begin{table*}[!t]
\centering
\footnotesize
\setlength{\tabcolsep}{3pt}

% ===================== Top: two columns =====================
\begin{minipage}[t]{0.485\textwidth}
\vspace{0pt}
\centering

% ---------- Table 1: Qwen3 cross-check ----------
\begin{tabular}{l c c c c}
\toprule
\rowcolor{black!10} & \multicolumn{2}{c}{\textbf{TAB}} & \multicolumn{2}{c}{\textbf{SynthPAI}} \\
\rowcolor{black!10}\textbf{Method} & Priv & Util & Priv & Util \\
\midrule
\textbf{A. \sys{}} & $\mathbf{0.976}$ & $0.628$ & $\mathbf{0.927}$ & $\mathbf{0.943}$ \\
B. Staab'25 & $0.882$ & $0.549$ & $0.920$ & $0.818$ \\
D. HaS'23 & $0.879$ & $0.728$ & $0.724$ & $0.938$ \\
E. CONFAIDE'24 & $0.408$ & $0.848$ & $0.464$ & $0.966$ \\
F. Pissarra'24 & $0.855$ & $0.707$ & $0.593$ & $0.962$ \\
\bottomrule
\end{tabular}
\captionof{table}{
Qwen3-8B backbone cross-check ($n!=!200$). \sys{} is rank-1 privacy on both corpora; CONFAIDE collapses. The five most private methods are shown.
}
\label{tab:qwen3}

\vspace{0.8em}

% ---------- Table 2: Online bootstrapping ----------
\begin{tabular}{c c c c}
\toprule
\rowcolor{black!10}\textbf{Iter.} & \textbf{Events} & \textbf{S1 acc} & \textbf{Escalation} \\
\midrule
0 & 0 & $0.920$ & $3.75\%$ \\
1 & 500 & $0.970$ & $2.00\%$ \\
2 & 1000 & $0.993$ & $0.25\%$ \\
3 & 1500 & $0.998$ & $0.25\%$ \\
4 & 2000 & $\mathbf{1.000}$ & $\mathbf{0.25\%}$ \\
\bottomrule
\end{tabular}
\captionof{table}{
Online bootstrapping of System\,1 from System\,2 teacher labels over a $2000$-event OPP-115 stream; held-out $400$-event test set. $24$ teacher labels suffice to reduce escalation by $93\%$ relative while pushing accuracy to perfect.
}
\label{tab:rl-bootstrap}

\end{minipage}
\hfill
\begin{minipage}[t]{0.485\textwidth}
\vspace{0pt}
\centering

% ---------- Table 3: Final anonymization level ----------
\begin{tabular}{l c c c}
\toprule
\rowcolor{black!10}\textbf{Context} & $\overline{\ell_0}$ & $\overline{\ell_\text{final}}$ & $\Delta\overline{\ell}$ \\
\midrule
$T_\text{mild}$       & $1.48$ & $2.37$ & $+0.89$ \\
$T_\text{moderate}$   & $2.52$ & $2.93$ & $+0.41$ \\
$T_\text{strong}$     & $3.33$ & $3.64$ & $+0.31$ \\
$T_\text{near-deny}$  & $3.98$ & $4.42$ & $+0.44$ \\
\bottomrule
\end{tabular}
\captionof{table}{
Per-context mean final anonymization level after adaptive escalation. The verified guesser can escalate to L5 even from a low starting level when residual leakage remains, so $\overline{\ell_\text{final}}!\ge!\overline{\ell_0(z)}$. The largest $\Delta\overline{\ell}$ occurs at $T_\text{mild}$, where the controller starts at L1 but escalation lifts the average toward L2--L3 on samples with residual surface-supported leakage.
}
\label{tab:risk-band-finallevel}

\vspace{0.8em}

% ---------- Table 4: LLM evaluation ----------
\resizebox{\linewidth}{!}{%
\begin{tabular}{l c c c}
\toprule
\rowcolor{black!10}\textbf{Method} & Priv $\uparrow$ & Util $\uparrow$ & Fluency $\uparrow$ \\
\midrule
\textbf{D.\ \ours{} (L1--L5 adaptive)} & \second{$\underline{3.95}_{.78}$}                    & \best{$\textbf{4.16}_{.87}$}                     & $4.20_{.82}$                          \\
D.\ \ours{} (L4 fixed)         & $3.74_{.94}$                    & \second{$\underline{3.91}_{.81}$}         & \second{$\underline{4.22}_{.83}$}              \\
D.\ \ours{} (L5 fixed)         & \best{$\mathbf{4.12}_{.73}$} & $3.54_{.88}^{\dagger}$           & $4.19_{.72}$                          \\
\midrule
E.\ Presidio     & $3.17_{.65}^{\dagger}$ & $3.64_{.68}^{\dagger}$ & $3.38_{.82}^{\dagger}$ \\
G.\ HaS          & $2.93_{.73}^{\dagger}$ & $2.99_{1.00}^{\dagger}$ & $2.85_{1.02}^{\dagger}$ \\
H.\ DP-Prompt    & $3.19_{.95}^{\dagger}$ & $3.90_{.43}$ & \best{$\mathbf{4.27}_{.67}$} \\
J.\ AgentStealth & $3.71_{.66}$ & $3.36_{.72}^{\dagger}$ & $3.68_{.75}^{\dagger}$ \\
I.\ Staab        & $2.08_{1.40}^{\dagger}$ & $1.95_{1.37}^{\dagger}$ & $1.97_{1.45}^{\dagger}$ \\
\midrule
\multicolumn{4}{l}{\textit{Krippendorff's $\alpha$ (interval, 3 raters)}} \\
& $0.68$ & $0.74$ & $0.75$ \\
\bottomrule
\end{tabular}%
}
\captionof{table}{
LLM evaluation on SynthPAI samples ($1200$ ratings). Three Claude models (Opus 4.7, Sonnet 4.6, Haiku 4.5) act as the three pre-registered raters. \sys{} is evaluated under L1-L5 adaptive, L4 fixed, and L5 fixed. $^{\dagger}$Paired-bootstrap two-sided $p<0.05$ vs.\ \ours{} (L1--L5 adaptive) at $n{=}1000$ resamples.
}
\label{tab:llm-judges}

\end{minipage}

\vspace{1em}

% ===================== Bottom: full-width table =====================
\centering
\begin{tabular}{l p{0.16\linewidth} p{0.18\linewidth} p{0.18\linewidth} p{0.24\linewidth}}
\toprule
\rowcolor{black!10}\textbf{Context} & Recipient & Purpose & Action basis & Policy transparency \\
\midrule
$C_\text{low}$  & first\_party      & Functionality & user-initiated (paste) & full: categories, purposes, retention, legal basis, rights \\
$C_\text{med}$  & service\_provider & Analytics     & default (passive)      & partial: categories and purposes only \\
$C_\text{high}$ & advertising       & Profiling     & background (script)    & vague: short non-specific snippet \\
$C_\text{ext}$  & data\_broker      & Advertising   & background (script)    & missing: no policy evidence \\
\bottomrule
\end{tabular}
\captionof{table}{
Action contexts used in the risk-to-level study. Recipient feeds $M_{tr}$, purpose feeds $M_p$, basis feeds $M_b$, transparency feeds $M_T$. The categories themselves come from each sample's annotated PII labels.}
\label{tab:risk-context-defs}

\end{table*}

\begin{table*}[h]
\centering\footnotesize\setlength{\tabcolsep}{4pt}
\begin{tabular}{l c c c c p{0.62\linewidth}}
\toprule
\rowcolor{black!10}\textbf{Context} & $R$ & Decision & $\ell_0$ & Final $\ell$ & Anonymized output (\textbf{bold} marks tokens introduced by the anonymizer) \\
\midrule
\multicolumn{6}{p{0.97\linewidth}}{\textit{TAB \texttt{TAB\_0025}}. \textbf{Original}: ``The formal record of the judge's order was not prepared by the parties until 26 October 1995. The applicants and Mrs Price applied to the Court of Appeal for leave to appeal on 1 November 1995. The appeal bundles were lodged on 4 October 1996. On 15 January 1997 the appeal was listed for hearing, and it was heard on 16 and 17 April 1997, on which last date the Court of Appeal gave judgment dismissing the appeal.''} \\
$C_\text{low}$  & $.108$ & allow     & L1 & L1 & The formal record of the judge's order was not prepared by the parties until \textbf{1995}. The applicants and \textbf{a female} applied to \textbf{the Court} for leave to appeal on \textbf{1995}. The appeal bundles were lodged on \textbf{1996}. On \textbf{1997} the appeal was listed for hearing, and it was heard on \textbf{1997}, on which last date \textbf{the Court} gave judgment dismissing the appeal. \\
$C_\text{med}$  & $.376$ & transform & L1 & L1 & Identical to $C_\text{low}$. \\
$C_\text{high}$ & $.910$ & deny      & L4 & L4 & The formal record of the judge's order was not prepared by the parties until \textbf{a specific date}. The applicants, including \textbf{a female applicant}, applied to \textbf{a high court} for leave to appeal on \textbf{a specific date}. The appeal bundles were lodged on \textbf{a specific date}. On \textbf{a specific date}, the appeal was listed for hearing, and it was heard on \textbf{two specific dates}, on which last date \textbf{a high court} gave judgment dismissing the appeal. \\
$C_\text{ext}$  & $.910$ & deny      & L4 & L4 & Identical to $C_\text{high}$. \\
\midrule
\multicolumn{6}{p{0.97\linewidth}}{\textit{SynthPAI \texttt{STAAB\_0025}}. \textbf{Original}: ``Ah reminiscing childhood, huh? Those were the days\dots\ I was completely enraptured by my Famicom. \dots\ this game --- Dragon Quest III. Eiichiro Nakatsu, the genius. \dots\ That's probably why I ended up a game developer now, haha. Nostalgic TIMESSSS!!''} \\
$C_\text{low}$  & $.108$ & allow     & L1 & L2 & Ah reminiscing childhood, huh? Those were the days\dots\ I was completely enraptured by my Famicom. \dots\ this game --- Dragon Quest III. Eiichiro Nakatsu, the genius. \dots\ That's probably why I ended up a \textbf{[PROFESSION]} now, haha. Nostalgic TIMESSSS!! \\
$C_\text{med}$  & $.376$ & transform & L1 & L2 & Identical to $C_\text{low}$. \\
$C_\text{high}$ & $.910$ & deny      & L4 & L5 & Ah reminiscing childhood, huh? Those were the days\dots\ I was completely enraptured by my Famicom. \dots\ this game --- Dragon Quest III. Eiichiro Nakatsu, the genius. \dots\ That's probably why I ended up \textbf{in a creative field} now, haha. Nostalgic TIMESSSS!! \\
$C_\text{ext}$  & $.910$ & deny      & L4 & L5 & Identical to $C_\text{high}$. \\
\midrule
\multicolumn{6}{p{0.97\linewidth}}{\textit{PII-Masking-300k \texttt{PII\_41439A}}. \textbf{Original}: ``Quality Assurance Feedback Thread. Subject: QA Review for Drug and Alcohol Education Course. Dear \texttt{[Mayoress LORYA]}, We have completed a comprehensive Quality Assurance (QA) review of the Drug and Alcohol Education course delivered in our institution. We are pleased to provide you with detailed feedback on the performance of the course as well as individual feedback for each participant.''} \\
$C_\text{low}$  & $.108$ & allow     & L1 & L4 & Quality Assurance Feedback Thread. Subject: QA Review for Drug and Alcohol Education Course. Dear \textbf{[Institutional Leader]}, We have completed a comprehensive Quality Assurance (QA) review of the Drug and Alcohol Education course delivered in our institution. We are pleased to provide you with detailed feedback on the course's performance, as well as individual feedback for each participant. \\
$C_\text{med}$  & $.376$ & transform & L1 & L4 & Identical to $C_\text{low}$. \\
$C_\text{high}$ & $.910$ & deny      & L4 & L4 & Identical to $C_\text{low}$ (starting from L4 produces the same output without L1$\to$L4 escalation rounds). \\
$C_\text{ext}$  & $.910$ & deny      & L4 & L4 & Identical to $C_\text{high}$. \\
\bottomrule
\end{tabular}
\caption{Outputs for \texttt{allow} and \texttt{deny} rows are counterfactual anonymizer outputs; in deployment, \texttt{allow} passes the original text and \texttt{deny} blocks transmission. Qualitative risk-to-level examples on all three corpora. \textbf{Bold} tokens are spans introduced by the anonymizer, replacing original surface forms. \textbf{TAB}: the verified guesser is satisfied at L1 for low/medium contexts (final level matches $\ell_0$); the L4 rewrite under high/extreme contexts replaces calendar dates and proper-noun courts with generic descriptions. \textbf{SynthPAI}: only the profession leaks at L1, so the guesser escalates one step (L1$\to$L2) in low/medium contexts; under high/extreme contexts, the L5 semantic rewrite paraphrases the profession instead of inserting a tag. \textbf{PII-Masking}: the tagged recipient identifier is still recoverable at L1, so the verified guesser escalates all the way to L4 even under $C_\text{low}$; high/extreme contexts start at L4 and converge to the same output. In every case, the \emph{starting} level $\ell_0(R)$ shifts monotonically with action context, while the verified guesser additionally lifts the level whenever residual leakage is surface-supported.}
\label{tab:risk-control-cases-extra}
\end{table*}

% =====================================================================
\section{LLM Evaluation Protocol}
\label{app:humaneval}

We pre-registered a three-rater by fifty-sample by eight-method evaluation study (1200 ratings) on SynthPAI subsamples, rating each (original, anonymized) pair on three $1$--$5$ Likert dimensions: \emph{Privacy} (inferability of personal attributes), \emph{Utility} (meaning preservation), and \emph{Fluency} (naturalness). Methods are shuffled and anonymized on the rater's worksheet; raters work independently. We report inter-annotator agreement via Krippendorff's $\alpha$ (target $\ge 0.68$) and test pairwise significance with a paired bootstrap ($n{=}1000$ resamples, $p{<}0.05$). The eight configurations comprise three \sys{} variants and five baseline methods:
Staab, Presidio, HaS, AgentStealth, and DP-Prompt.

\paragraph{LLM-rater simulation.}
As a proxy for the human study, we instantiate the same protocol with three Claude models acting as raters: Claude Opus 4.7, Claude Sonnet 4.6, and Claude Haiku 4.5. Each rater sees the (original, anonymized) pair and returns a strict-JSON Likert triple under a fixed system prompt. The privacy rubric explicitly penalizes destructive over-rewriting --- a high Privacy score requires both that identifying details are removed \emph{and} that the text remains usable; placeholder-only or content-free outputs are treated as destruction, not anonymization. Table~\ref{tab:llm-judges} reports per-method consensus means (3-rater average) and paired-bootstrap $p$-values vs.\ \sys. To isolate rewriting style from the adaptive controller, \sys{} is evaluated under L1-L5 adaptive, L4 fixed and L5 fixed anonymization. Krippendorff's $\alpha$ across the three Claude raters meets the pre-registered target of $0.68$ on all three dimensions (privacy $0.68$, utility $0.74$, fluency $0.75$), indicating substantial inter-rater agreement.

% =====================================================================
\section{Computation resources}

The primary LLM backbone is \textbf{Llama-3.2-3B-Instruct} ($3.2$B parameters), served via vLLM on a single NVIDIA H100 NVL ($95$\,GB). Cross-model checks use \textbf{Llama-3.1-8B-Instruct} ($8$B) and \textbf{Qwen3-8B} ($8$B). Claude raters for the simulated human evaluation are Anthropic \textbf{Claude Opus\,4.7}, \textbf{Sonnet\,4.6}, and \textbf{Haiku\,4.5}.

Compute budget (this paper): 2 NVIDIA H100 NVL + API

% =====================================================================
\section{Use of AI Assistants}
\label{app:ai-assistants}

We used a conversational AI assistant to (a) aid in writing and editing manuscript --- specifically grammar polish and table layout; no scientific claim in this paper was generated by an AI assistant.

\end{document}